\tikzset{attack/.style={-latex, thick, decorate, decoration={zigzag, amplitude=1, post length=2pt}}}
\tikzset{symmetric attack/.style={latex-latex, thick, decorate, decoration={zigzag, amplitude=1, post length=2pt, pre length=2pt}}}
\tikzset{self-attack/.style={thick, decorate, decoration={zigzag, amplitude=1}}}
\tikzset{pattack/.style={-latex, thick}}
\tikzset{symmetric pattack/.style={latex-latex, thick}}
\tikzset{self-pattack/.style={}}
\tikzset{normal attack/.style={-latex, dashed, thick}}
\tikzset{symmetric normal attack/.style={latex-latex, dashed, thick}}
\tikzset{normal self-attack/.style={dashed, thick}}
\tikzset{reverse attack/.style={-latex, dotted, thick}}
\tikzset{symmetric reverse attack/.style={latex-latex, dotted, thick}}
\tikzset{reverse self-attack/.style={dotted, thick}}
\tikzset{grayed/.style={fill=gray, fill opacity=0.3}}
\theoremstyle{plain}
\newtheorem{theorem}{Theorem}
\newtheorem{lemma}[theorem]{Lemma}
\newtheorem{proposition}[theorem]{Proposition}
\newtheorem{corollary}[theorem]{Corollary}
\theoremstyle{definition}
\newtheorem{definition}{Definition}
\newtheorem{principle}[definition]{Definition}
\newtheorem{axiom}[definition]{Axiom}
\newtheorem{example}[definition]{Example}
\def\abap{ABA$^+$}
\def\aspicp{ASPIC$^+$}
\def\semantics{stable, complete, preferred, grounded, ideal}
\def\contraposition{the Axiom of Contraposition}
\def\wcp{the Axiom of Weak Contraposition}
\newcommand*{\cut}{\texttt{CUT}}
\newcommand*{\scut}{\texttt{STRICT CUT}}
\newcommand*{\scuts}{\texttt{SCEPTICAL STRICT CUT}}
\newcommand*{\scutc}{\texttt{CREDULOUS STRICT CUT}}
\newcommand*{\acut}{\texttt{ASM CUT}}
\newcommand*{\acuts}{\texttt{SCEPTICAL ASM CUT}}
\newcommand*{\acutc}{\texttt{CREDULOUS ASM CUT}}
\newcommand*{\mon}{\texttt{MON}}
\newcommand*{\smon}{\texttt{STRICT MON}}
\newcommand*{\smons}{\texttt{SCEPTICAL STRICT MON}}
\newcommand*{\smonc}{\texttt{CREDULOUS STRICT MON}}
\newcommand*{\amon}{\texttt{ASM MON}}
\newcommand*{\amons}{\texttt{SCEPTICAL ASM MON}}
\newcommand*{\amonc}{\texttt{CREDULOUS ASM MON}}
\def\Args{\ensuremath{\textit{Args}}}
\def\attacks{\ensuremath{\leadsto}}
\def\nattacks{\ensuremath{\not\leadsto}}
\def\AF{\ensuremath{(\Args, \attacks)}}
\def\pattacks{\ensuremath{\attacks_<}}
\def\npattacks{\ensuremath{\nattacks_<}}
\def\A{\ensuremath{\textsf{A}}}
\def\B{\ensuremath{\textsf{B}}}
\def\contrary{\ensuremath{\bar{}\,\bar{}\,\bar{}\,}}
\def\abaf{\ensuremath{(\LL, \R, \A, \contrary)}} 
\def\abafp{\ensuremath{(\LL, \R, \A, \contrary, \leqslant)}} 
\def\abafe{\ensuremath{(\LL, \R, \A, \contrary, \emptyset)}}
\def\pattacks{\ensuremath{\attacks_<}}
\def\npattacks{\ensuremath{\nattacks_<}}
\def\cn{\textit{Cn}}
\def\Def{\textit{Def}} 
\def\ot{\ensuremath{\leftarrow}} 
\newcommand{\contr}[1]{\ensuremath\overline{#1}}
\def\asma{\ensuremath{\alpha}}
\def\asmb{\ensuremath{\beta}}
\def\asme{\ensuremath{\varepsilon}} 
\def\asmx{\ensuremath{x}} 
\def\asmp{\ensuremath{p}} 
\def\asmq{\ensuremath{q}} 
\def\asmA{\ensuremath{A}}
\def\asmE{\ensuremath{E}}
\def\A{\ensuremath{\mathcal{A}}}
\def\B{\ensuremath{\mathcal{B}}}
\def\F{\ensuremath{\mathcal{F}}}
\def\LL{\ensuremath{\mathcal{L}}}
\def\R{\ensuremath{\mathcal{R}}}
\newcommand*{\principleI}{Principle I}
\newcommand*{\maximal}{the Principle of Maximal Elements}
\newcommand*{\conflict}{the Principle of Conflict Preservation}
\newcommand*{\emptypref}{the Principle of Empty Preferences}
\setlist{nolistsep}
\begin{document}

\title{Properties of \abap~for Non-Monotonic Reasoning\thanks{This is a revised version of the paper presented at the workshop.}}
\author{Kristijonas \v Cyras \and Francesca Toni \\ Imperial College London, UK}
\maketitle

\begin{abstract}
We investigate properties of \abap, a formalism that extends the well studied structured argumentation formalism Assumption-Based Argumentation (ABA) with a preference handling mechanism. 
In particular, we establish desirable properties that \abap~semantics exhibit. 
These pave way to the satisfaction by \abap~of some (arguably) desirable principles of preference handling in argumentation and non-monotonic reasoning,
as well as non-monotonic inference properties of \abap~under various semantics. 
\end{abstract}

\section{Introduction}
\label{sec:Introduction}

Recent decades have seen a number of non-monotonic reasoning (NMR) formalisms advanced (see e.g.~\cite{Brewka:Nemiela:Truszczynski:2008-Handbook} for an overview). 
Since preferences are ubiquitous in common-sense reasoning, 
there has been a considerable effort to integrate preference information within NMR formalisms (cf.~e.g.~\cite{Brewka:Nemiela:Truszczynski:2008,Delgrande:Schaub:Tompits:Wang:2004,Domshlak:Hullermeier:Kaci:Prade:2011,Kaci:2011}). 
To evaluate distinct formalisms, various properties of both non-monotonic inference and preference handling have been proposed, 
see e.g.~\cite{Makinson:1988,Kraus:Lehmann:Magidor:1990,Brewka:Eiter:1999,Brewka:Truszczynski:Woltran:2010,Simko:2014}.

Meanwhile, argumentation (as overviewed in \cite{Rahwan:Simari:2009}) has become an established branch of AI widely used for NMR 
(see e.g.~\cite{Dung:1995,Bondarenko:Dung:Kowalski:Toni:1997,Modgil:Prakken:2013}).
Broadly speaking, information in argumentation is represented via \emph{arguments}, while \emph{attacks} among them indicate conflicts. 
Procedures, known as \emph{argumentation semantics}, are employed to select \emph{extensions}, i.e.~sets of collectively acceptable arguments. 
Preferences in argumentation also play a significant role (cf.~e.g.~\cite{Simari:Loui:1992,Kaci:2011}), 
by allowing to, for instance, discriminate among arguments or extensions. 
Over the years, numerous formalisms of argumentation with preferences have been presented (see Section \ref{sec:Related Work}) 
and some properties for argumentation with preferences indicated (e.g.~\cite{Brewka:Truszczynski:Woltran:2010,Modgil:Prakken:2013,Amgoud:Vesic:2014,Dung:2016}). 

NMR properties are also adaptable to argumentation setting. 
For example, the well known non-monotonic inference properties of \emph{Cautious Monotonicity} and \emph{Cumulative Transitivity} 
(cf.~\cite{Makinson:1988,Kraus:Lehmann:Magidor:1990}) 
concern what happens when a conclusion reached through a reasoning process is added to the knowledge base to reason with anew. 
These properties have been cast with respect to extensions in argumentation, in e.g.~\cite{Cyras:Toni:2015,Dung:2016}. 

Preference handling properties for NMR can be phrased in terms of extensions in argumentation too. 
For instance, the well known Principle I from \cite{Brewka:Eiter:1999} regarding preferred answer sets can be applied to argumentation semantics thus: 
if two extensions $E_1$ and $E_2$ coincide except for two arguments $\A \in E_1 \setminus E_2$ and $\B \in E_2 \setminus E_1$ such that $\A$ is preferred over $\B$, 
then $E_2$ should not be chosen as a `preferable' extension. 
Likewise, a common property of NMR says that, 
in the absence of preference information, 
a formalism extended with a preference handling mechanism should return the same extensions as the preference-free version of the formalism (see e.g.~\cite{Brewka:Truszczynski:Woltran:2010,Simko:2014}).

In this paper, drawing from the above mentioned works, 
we investigate various properties of a recently proposed NMR formalism \abap~\cite{Cyras:Toni:2016-KR}.
\abap~extends with a preference handling mechanism a well established argumentation formalism, 
Assumption-Based Argumentation (ABA) \cite{Bondarenko:Dung:Kowalski:Toni:1997,Toni:2014}. 
Whereas a common way to approach preferences in argumentation 
is to use preference information to \emph{discard} the attacks from arguments that are less preferred than the ones they attack 
(see e.g.~\cite{Amgoud:Cayrol:2002,Bench-Capon:2003,Kaci:Torre:2008,Brewka:Ellmauthaler:Strass:Wallner:Woltran:2013,Besnard:Garcia:Hunter:Modgil:Prakken:Simari:Toni:2014}), 
\abap~instead \emph{reverses} such attacks. 
We show that \abap's method of accounting for preferences satisfies (arguably) desirable properties.

On the one hand, we consider preference handling properties from 
\cite{Brewka:Eiter:1999,Brewka:Truszczynski:Woltran:2010,Amgoud:Vesic:2014} 
and show their satisfaction under various \abap~semantics. 
On the other hand, building on the investigations of Cumulative Transitivity and Cautious Monotonicity for ABA \cite{Cyras:Toni:2015}, 
we analyse \abap~in the light of these non-monotonic inference properties. 
In addition, we make use of the well known principle of \emph{Contraposition} of rules (see e.g.~\cite{Modgil:Prakken:2013}) 
and prove it guarantees that \abap~semantics satisfy desirable properties akin to those in e.g.~\cite{Dung:1995,Bondarenko:Dung:Kowalski:Toni:1997,Modgil:Prakken:2013}. 

The paper is organized as follows. 
Sections \ref{sec:Preliminaries} and \ref{sec:ABA+} give preliminaries on ABA and \abap. 
In Section \ref{sec:Properties} \abap~semantics are analysed. 
Preference handling properties of \abap~are studied in Section \ref{sec:Preference Handling}, 
while Section \ref{sec:NMR} concerns \abap~and non-monotonic inference properties. 
After discussing related work (Section \ref{sec:Related Work}), we conclude in Section \ref{sec:Conclusions}.

\section{Preliminaries}
\label{sec:Preliminaries}

We base the following ABA background on \cite{Toni:2014}.

\begin{definition}
\label{definition:ABA framework}
An \emph{ABA framework} is a tuple $\abaf$, where:
\begin{itemize}
\item $(\mathcal{L}, \mathcal{R})$ is a deductive system with a language $\mathcal{L}$ and a set $\mathcal{R}$ of rules of the form $\varphi_0 \leftarrow \varphi_1, \ldots, \varphi_m$ with $m \geqslant 0$ and $\varphi_i \in \mathcal{L}$ for $i \in \{ 0, \ldots, m \}$; 
$\varphi_0$ is referred to as the \emph{head} of the rule, and $\varphi_1, \ldots, \varphi_m$ is referred to as the \emph{body} of the rule;
if $m = 0$, then the rule $\varphi_0 \leftarrow \varphi_1, \ldots, \varphi_m$ is written as $\varphi_0 \leftarrow \top$ and is said to have an empty body;
\item $\mathcal{A} \subseteq \mathcal{L}$ is a non-empty set, whose elements are referred to as \emph{assumptions};
\item $\contrary : \mathcal{A} \to \mathcal{L}$ is a total map: for $\alpha \in \mathcal{A}$, the $\mathcal{L}$-formula $\overline{\alpha}$ is referred to as the \emph{contrary} of $\alpha$.
\end{itemize}
\end{definition}

We focus on \emph{flat} ABA frameworks, where no assumption is the head of any rule. 
Flat ABA frameworks are very common, and capture, as instances, widely used paradigms of non-monotonic reasoning, such as Logic Programming and Default Logic (see e.g.~\cite{Bondarenko:Dung:Kowalski:Toni:1997}). 

\begin{definition}
\label{definition:Deduction}
A \emph{deduction for $\varphi \in \mathcal{L}$ supported by $S \subseteq \mathcal{L}$ and $R \subseteq \mathcal{R}$}, 
denoted by $S \vdash^R \varphi$, 
is a finite tree with the root labelled by $\varphi$, leaves labelled by $\top$ or elements from $S$, the children of non-leaf nodes $\psi$ labelled by the elements of the body of some rule from $\mathcal{R}$ with head $\psi$, and $R$ being the set of all such rules. 
For $E \subseteq \mathcal{L}$, the \emph{conclusions} $\cn(E)$ of $E$ is the set of elements with deductions supported by $S \subseteq E$ and some $R \subseteq \mathcal{R}$, 
i.e.~$\cn(E) = \{ \varphi \in \mathcal{L}~:~\exists~S \vdash^R \varphi,~S \subseteq E,~R \subseteq \mathcal{R} \}$.
\end{definition}

Assumption-level attacks in ABA are defined thus.
\begin{definition}
\label{definition:Assumption-level attack}
A set $A \subseteq \mathcal{A}$ \emph{attacks} a set $B \subseteq \mathcal{A}$, denoted $A \attacks B$, if there is a deduction $A' \vdash^{R} \overline{\beta}$, for some $\beta \in B$, supported by some $A' \subseteq A$ and $R \subseteq \mathcal{R}$. 
For $E \subseteq \mathcal{A}$, also called an \emph{extension}, we say that:
\begin{itemize}
\item $E$ is \emph{conflict-free} if $E \nattacks E$; 
\item $E$ \emph{defends} $\alpha \in \mathcal{A}$ if for all $B \attacks \{ \alpha \}$ it holds that $E \attacks B$; 
\item $E$ is \emph{admissible} if $E$ is conflict-free and defends all $\alpha \in E$.
\end{itemize}
\end{definition}

The most standard ABA semantics are as follows.
\begin{definition}
\label{definition:ABA semantics} 
A conflict-free set $E \subseteq \mathcal{A}$ is: 
\begin{itemize}
\item \emph{stable}, if $E \attacks \{ \beta \}$ for every $\{ \beta \} \subseteq \mathcal{A} \setminus E$; 
\item \emph{complete} if $E$ is admissible and contains every assumption it defends; 
\item \emph{preferred} if $E$ is $\subseteq$-maximally admissible; 
\item \emph{grounded} if $E$ is $\subseteq$-minimally complete; 
\item \emph{ideal} if $E$ is $\subseteq$-maximal such that $E$ is admissible and contained in all preferred extensions.
\end{itemize}
\end{definition}

\begin{example}
\label{example:background}
Let $\mathcal{L} = \{ \alpha, \beta, \overline{\alpha}, \overline{\beta} \}$, 
$\mathcal{R} = \{ \overline{\alpha} \leftarrow \beta \}$ 
and $\mathcal{A} = \{ \alpha, \beta \}$. 
In $\abaf$, $\{ \beta \}$ attacks both $\{ \alpha \}$ and $\{ \alpha, \beta \}$, while $\{ \alpha, \beta \}$ attacks itself and $\{ \alpha \}$. 
$\abaf$ can be graphically represented via its \emph{assumption framework}, pictured below 
(in illustrations of assumption frameworks, nodes hold sets of assumptions while directed edges indicate attacks): 
\begin{center}
\footnotesize
\begin{tikzpicture}
\node at (0.5, 0) {$\emptyset$}; 
\draw (0.5, 0) ellipse (0.4 cm and 0.4 cm);
\node at (2, 0) {$\{ \alpha \}$}; 
\draw (2, 0) ellipse (0.4 cm and 0.4 cm);
\node at (4, 0) {$\{ \beta \}$}; 
\draw (4, 0) ellipse (0.4 cm and 0.4 cm);
\node at (6.6, 0) {$\{ \alpha, \beta \}$}; 
\draw (6.6, 0) ellipse (0.6 cm and 0.4 cm);

\draw[attack] (3.6, 0) to (2.4, 0); 
\draw[attack] (4.4, 0) to (6, 0); 
\draw[self-attack, out=15, in = 90] (7.2, 0.1) to (8, 0);
\draw[attack, out=270, in = 345] (8, 0) to (7.2, -0.1); 
\draw[attack, out=165, in = 15] (6.07, 0.2) to (2.33, 0.2); 
\end{tikzpicture}
\end{center}
This $\abaf$ has a unique complete extension $\{ \beta \}$, which is also grounded, ideal, preferred and stable, and has conclusions $\cn(\{ \beta \}) = \{ \overline{\alpha}, \beta \}$.
\end{example}

\section{\abap}
\label{sec:ABA+}

\abap~\cite{Cyras:Toni:2016-KR} extends ABA with preferences as follows.

\begin{definition}
\label{definition:ABA+ framework}
An \emph{\abap~framework} is any tuple $\abafp$, where $\abaf$ is an ABA framework and $\leqslant$ is a preorder (i.e.~a transitive and reflexive binary relation) on $\mathcal{A}$.
\end{definition}

Differently from e.g.~\cite{Modgil:Prakken:2013,Modgil:Prakken:2014,Garcia:Simari:2014}, \abap~considers preferences on assumptions rather than (defeasible) rules. 
This is not, however, a conceptual difference, since assumptions are the only defeasible component in \abap. 

Unless stated differently, we consider a fixed, but otherwise arbitrary \abap~framework $\abafp$, and implicitly assume $\abaf$ to be its underlying ABA framework. 
The strict counterpart $<$ of $\leqslant$ is defined as $\alpha < \beta$ iff $\alpha \leqslant \beta$ and $\beta \nleqslant \alpha$, for any $\alpha$ and $\beta$.

\abap~attack relation is given thus. 

\begin{definition}
\label{definition:<-attack}
A set $A \subseteq \mathcal{A}$ of assumptions \emph{$<$-attacks} a set $B \subseteq \mathcal{A}$ of assumptions, written as $A \pattacks B$, if:
\begin{itemize}
\item either there is a deduction $A' \vdash^{R} \overline{\beta}$, for some $\beta \in B$, supported by $A' \subseteq A$, and $\nexists \alpha' \in A'$ with $\alpha' < \beta$;
\item or there is a deduction $B' \vdash^{R} \overline{\alpha}$, for some $\alpha \in A$, supported by $B' \subseteq B$, and $\exists \beta' \in B'$ with $\beta' < \alpha$.
\end{itemize}
The first type of attack is called \emph{normal}, and the second one \emph{reverse}.
\end{definition}
 
\abap~requires a standard ABA attack to be reversed whenever the attacker has an assumption less preferred than the one attacked. 
The following example illustrates.

\begin{example}
\label{example:reverse attack}
Recall $\abaf$ from Example \ref{example:background}. 
Suppose $\beta < \alpha$.  
In the \abap~framework $\abafp$, $\{ \beta \}$ `tries' to attack $\{ \alpha \}$, but is prevented by the preference $\beta < \alpha$. 
Instead, $\{ \alpha \}$ $<$-attacks $\{ \beta \}$, and likewise $\{ \alpha, \beta \}$, via reverse attack, 
and the latter $<$-attacks both itself and $\{ \beta \}$ via reverse attack. 
$\abafp$ can be represented graphically as follows 
(reverse attacks in assumption frameworks will be denoted by dotted arrows):
\begin{center}
\footnotesize
\begin{tikzpicture}
\node at (0.5, 0) {$\emptyset$}; 
\draw (0.5, 0) ellipse (0.4 cm and 0.4 cm);
\node at (2, 0) {$\{ \alpha \}$}; 
\draw (2, 0) ellipse (0.4 cm and 0.4 cm);
\node at (4, 0) {$\{ \beta \}$}; 
\draw (4, 0) ellipse (0.4 cm and 0.4 cm);
\node at (6.6, 0) {$\{ \alpha, \beta \}$}; 
\draw (6.6, 0) ellipse (0.6 cm and 0.4 cm);

\draw[reverse attack] (2.4, 0) to (3.6, 0); 
\draw[reverse attack] (6, 0) to (4.4, 0); 
\draw[reverse self-attack, out=15, in = 90] (7.2, 0.1) to (8, 0);
\draw[reverse attack, out=270, in = 345] (8, 0) to (7.2, -0.1); 
\draw[reverse attack, out=15, in = 165] (2.33, 0.2) to (6.07, 0.2); 
\end{tikzpicture}
\end{center}
In contrast with the ABA framework, where $\{ \beta \}$ is unattacked and generates an attack on $\{ \alpha \}$, 
in the \abap~framework, $\{ \alpha \}$ is $<$-unattacked and $<$-attacks all sets of assumptions that contain $\beta$. 
This concords with the intended meaning of the preference $\beta < \alpha$, that the conflict should be resolved in favour of $\alpha$. 
\end{example}

This concept of $<$-attack reflects the interplay between deductions, contraries and preferences, by representing inherent conflicts among sets of assumptions while accounting for preference information.  
Normal attacks follow the standard notion of attack in ABA, additionally, 
preventing the attack to succeed when the attacker uses assumptions less preferred than the one attacked. 
Reverse attacks, meanwhile, resolve the conflict between two sets of assumptions by favouring the one containing an assumption whose contrary is deduced, 
over the one which uses less preferred assumptions to deduce that contrary. 

The notions of conflict-freeness and defence w.r.t.~$\pattacks$, and \abap~semantics are given as follows.

\begin{definition}
\label{definition:<-conflict-freeness}
For $E \subseteq \mathcal{A}$ we say that:
\begin{itemize}
\item $E$ is \emph{$<$-conflict-free} if $E \npattacks E$; 
\item $E$ \emph{$<$-defends} $\alpha \in \mathcal{A}$ if for all $B \pattacks \{ \alpha \}$ it holds that $E \pattacks B$; 
\item $E$ \emph{$<$-defends} $\asmA \subseteq \mathcal{A}$ if for all $B \pattacks \asmA$ it holds that $E \pattacks B$; and 
\item $E$ is \emph{$<$-admissible} if $E$ is $<$-conflict-free and $<$-defends $E$. 
\end{itemize}
\end{definition}

In Example \ref{example:reverse attack}, $\emptyset$, $\{ \alpha \}$ and $\{ \beta \}$ are conflict-free in $\abaf$ and $<$-conflict-free in $\abafp$, whereas $\{ \alpha, \beta \}$ is not ($<$-)conflict-free in either framework.


\begin{definition}
\label{definition:ABA+ semantics} 
A $<$-conflict-free extension $E \subseteq \mathcal{A}$ is: 
\begin{itemize}
\item \emph{$<$-stable} if $E \pattacks \{ \alpha \}$ for every $\{ \alpha \} \subseteq \mathcal{A} \setminus E$;
\item \emph{$<$-complete} if $E$ is $<$-admissible and contains every assumption it $<$-defends; 
\item \emph{$<$-preferred} if $E$ is $\subseteq$-maximally $<$-admissible; 
\item \emph{$<$-grounded} if $E$ is $\subseteq$-minimally $<$-complete; 
\item \emph{$<$-ideal} if $E$ is $\subseteq$-maximal such that $E$ is $<$-admissible and contained in all $<$-preferred extensions.
\end{itemize}
\end{definition}

In Example \ref{example:reverse attack}, $\{ \alpha \}$ is a unique $<$-stable, $<$-complete, $<$-preferred, $<$-grounded and $<$-ideal extension. 

Henceforth, we assume $\sigma \in \{$\semantics$\}$ and use $<$-$\sigma$ to denote any \abap~semantics.

We recall several features that \abap~possesses and that will be used later. 

\begin{lemma}
\label{lemma:attacks on supersets}
Let $A' \subseteq A \subseteq \mathcal{A}$ and $B' \subseteq B \subseteq \mathcal{A}$ be given. If $A' \pattacks B'$, then $A \pattacks B$.
\end{lemma}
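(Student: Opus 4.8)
The plan is to prove the claim by unfolding the definition of $\pattacks$ and checking that each of the two disjuncts (normal and reverse attack) is preserved under taking supersets on both sides. Suppose $A' \pattacks B'$ with $A' \subseteq A$ and $B' \subseteq B$. By Definition \ref{definition:<-attack}, one of two cases holds, and I would treat them separately.

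First, suppose the attack is \emph{normal}: there is a deduction $A'' \vdash^R \overline{\beta}$ for some $\beta \in B'$, supported by some $A'' \subseteq A'$, with no $\alpha' \in A''$ satisfying $\alpha' < \beta$. Since $A'' \subseteq A' \subseteq A$, the very same deduction witnesses an attack originating from $A$; since $\beta \in B' \subseteq B$, the attacked element still lies in $B$; and the condition $\nexists \alpha' \in A''$ with $\alpha' < \beta$ is unchanged because the supporting set $A''$ is literally the same. Hence $A \pattacks B$ via a normal attack.

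Second, suppose the attack is \emph{reverse}: there is a deduction $B'' \vdash^R \overline{\alpha}$ for some $\alpha \in A'$, supported by some $B'' \subseteq B'$, with some $\beta' \in B''$ satisfying $\beta' < \alpha$. Again $B'' \subseteq B' \subseteq B$, so the deduction witnesses an attack from (a subset of) $B$; the head $\alpha$ lies in $A' \subseteq A$; and the witness $\beta' \in B''$ with $\beta' < \alpha$ is retained verbatim. Hence $A \pattacks B$ via a reverse attack. In either case $A \pattacks B$, as required.

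The argument is essentially bookkeeping: the crucial observation is that the support sets $A''$ (resp.\ $B''$) and the particular witnesses $\beta$, $\alpha$, $\beta'$ are \emph{reused unchanged}, so none of the existential or preference side-conditions in Definition \ref{definition:<-attack} can be disturbed by enlarging $A'$ to $A$ or $B'$ to $B$. The only point requiring any care — and the closest thing to an obstacle — is the directionality of the two side-conditions: the normal case forbids a less-preferred assumption in the \emph{attacking} support, while the reverse case requires a less-preferred assumption in the \emph{attacked} support. Because in both cases the relevant support set is a subset of the smaller side and is kept fixed rather than expanded, monotonicity of the outer sets $A$ and $B$ never interferes with these conditions, and the proof goes through without needing any assumption about $\leqslant$.
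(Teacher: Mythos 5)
Your proof is correct: the paper states this lemma without proof (it is recalled from the original \abap~paper of \v Cyras and Toni, 2016), and your case analysis---reusing the same deduction, support set, and preference witnesses verbatim so that neither the universal condition in the normal case nor the existential condition in the reverse case is disturbed---is exactly the intended routine argument, including the correct observation that the side-conditions attach to the fixed support sets rather than to the enlarged outer sets.
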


\begin{lemma}
\label{lemma:attacks}
For any $A, B \subseteq \mathcal{A}$:
\begin{itemize}
\item if $A \attacks B$, then either $A \pattacks B$ or $B \pattacks A$;
\item if $A \pattacks B$, then either $A \attacks B$ or $B \attacks A$.
\end{itemize}
\end{lemma}

\section{Properties of \abap~Semantics}
\label{sec:Properties}

To ensure that the familiar relations between semantics carry from ABA over to \abap, we want to guarantee the so-called Fundamental Lemma \cite{Dung:1995,Bondarenko:Dung:Kowalski:Toni:1997} (see below). 
To this end, we follow the well established structured argumentation formalism  \aspicp~\cite{Modgil:Prakken:2013,Modgil:Prakken:2014} and impose the principle of \emph{Contraposition},  reformulated for \abap~as follows.

\begin{axiom}
\label{axiom:Contraposition}
$\abafp$ satisfies \textbf{\contraposition} if 
for all $A \subseteq \mathcal{A}$, $R \subseteq \mathcal{R}$ and $\beta \in \mathcal{A}$ it holds that 
if $A \vdash^R \overline{\beta}$, then for every $\alpha \in A$, 
there is $R_{\alpha} \subseteq \mathcal{R}$ with $(A \setminus \{ \alpha \}) \cup \{ \beta \} \vdash^{R_{\alpha}} \overline{\alpha}$.
\end{axiom}

This axiom requires that if an assumption plays a role in deriving the contrary of another assumption, then it should contrapositively be possible for the latter to induce a derivation of the contrary of the former assumption too. 
The following example illustrates the effect Contraposition has in \abap.

\begin{example}
\label{example:fundamental}
Let $\mathcal{R} = \{ \overline{\beta} \leftarrow \alpha, \gamma \}$, 
$\mathcal{A} = \{ \alpha, \beta, \gamma \}$ and $\alpha < \beta$, $\alpha < \gamma$. 
(The language and the contrary mapping are implicit from $\mathcal{R}$ and $\mathcal{A}$.) 
This \abap~framework $\abafp$ does not satisfy \contraposition. 
Its assumption framework (omitting $\emptyset$, $\mathcal{A}$ and $<$-attacks to and from $\mathcal{A}$) is shown below:
\begin{center}
\footnotesize
\begin{tikzpicture}
\node at (3, 0) {$\{ \alpha \}$}; 
\draw (3, 0) ellipse (0.5 cm and 0.4 cm);
\node at (1, 0.6) {$\{ \beta \}$}; 
\draw (1, 0.6) ellipse (0.5 cm and 0.4 cm);
\node at (3, 1.2) {$\{ \gamma \}$}; 
\draw (3, 1.2) ellipse (0.5 cm and 0.4 cm);
\node at (5, 0) {$\{ \alpha, \beta \}$}; 
\draw (5, 0) ellipse (0.7 cm and 0.4 cm); 
\node at (7.5, 0.6) {$\{ \alpha, \gamma \}$}; 
\draw (7.5, 0.6) ellipse (0.7 cm and 0.4 cm); 
\node at (5, 1.2) {$\{ \beta, \gamma \}$}; 
\draw (5, 1.2) ellipse (0.7 cm and 0.4 cm); 

\draw [reverse attack] (1.5, 0.6) to (6.8, 0.6); 
\draw [reverse attack, out=10, in=135] (5.7, 1.2) to (7, 0.85); 
\draw [reverse attack, out=350, in=225] (5.7, 0) to (7, 0.35); 
\end{tikzpicture}
\end{center}
There are no extensions under, for instance, $<$-complete semantics, 
because all the singletons $\{ \alpha \}$, $\{ \beta \}$ and $\{ \gamma \}$ are $<$-unattacked, 
but $\{ \alpha, \beta, \gamma \}$ is not $<$-conflict-free. 

If the rules $\overline{\alpha} \leftarrow \beta, \gamma$ and $\overline{\gamma} \leftarrow \alpha, \beta$ are added to $\mathcal{R}$ to constitute $\mathcal{R}'$, 
then the resulting $(\mathcal{L}, \mathcal{R}', \mathcal{A}, \contrary, \leqslant)$ 
satisfies \contraposition~and its assumption framework looks as follows
($<$-attacks that are both normal and reverse are depicted as solid directed edges):
\begin{center}
\footnotesize
\begin{tikzpicture}
\node at (3, 0) {$\{ \alpha \}$}; 
\draw (3, 0) ellipse (0.5 cm and 0.4 cm);
\node at (1, 1) {$\{ \beta \}$}; 
\draw (1, 1) ellipse (0.5 cm and 0.4 cm);
\node at (3, 2) {$\{ \gamma \}$}; 
\draw (3, 2) ellipse (0.5 cm and 0.4 cm);
\node at (5, 0) {$\{ \alpha, \beta \}$}; 
\draw (5, 0) ellipse (0.7 cm and 0.4 cm); 
\node at (7.5, 1) {$\{ \alpha, \gamma \}$}; 
\draw (7.5, 1) ellipse (0.7 cm and 0.4 cm); 
\node at (5, 2) {$\{ \beta, \gamma \}$}; 
\draw (5, 2) ellipse (0.7 cm and 0.4 cm); 

\draw [reverse attack] (1.5, 1) to (6.8, 1); 
\draw [pattack] (5.7, 2) to (7, 1.3); 
\draw [symmetric reverse attack] (5.7, 0) to (7, 0.7); 
\draw [reverse attack, out=330, in=110] (3.45, 1.8) to (4.65, 0.35); 
\draw [pattack] (5, 1.6) to (5, 0.4); 
\draw [normal attack, out=210, in=70] (4.4, 1.8) to (3.3, 0.3); 
\end{tikzpicture}
\end{center}
Here, $\{ \beta, \gamma \}$ is a unique $<$-complete extension. 
\end{example}

We prove next that in the presence of Contraposition, the Fundamental Lemma is guaranteed to hold in \abap.

\begin{lemma}
\label{lemma:Fundamental}
Suppose that $\abafp$ satisfies \contraposition. 
Let $S \subseteq \mathcal{A}$ be $<$-admissible and assume that $S$ $<$-defends $\alpha, \alpha' \in \mathcal{A}$. 
Then $S \cup \{ \alpha \}$ is $<$-admissible and $<$-defends $\alpha'$.
\end{lemma}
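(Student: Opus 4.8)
The plan is to establish the three components of the statement separately: that $S \cup \{\alpha\}$ $<$-defends $\alpha'$, that $S \cup \{\alpha\}$ is $<$-conflict-free, and that $S \cup \{\alpha\}$ $<$-defends itself; the last two together give $<$-admissibility. The defence of $\alpha'$ is immediate monotonicity: for every $B \pattacks \{\alpha'\}$ we have $S \pattacks B$ since $S$ $<$-defends $\alpha'$, and then $S \cup \{\alpha\} \pattacks B$ by Lemma~\ref{lemma:attacks on supersets}. The same argument shows that $S$ $<$-defending the set $S$ entails $S$ $<$-defending each $\gamma \in S$ (any $B \pattacks \{\gamma\}$ also $<$-attacks $S$ by Lemma~\ref{lemma:attacks on supersets}), which I will use freely. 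A second reusable fact — the engine of the proof — is that under the hypotheses on $S$ a plain attack $S \attacks \{\alpha\}$ is impossible: by Lemma~\ref{lemma:attacks} it gives $S \pattacks \{\alpha\}$ or $\{\alpha\} \pattacks S$; in the first case $S$ $<$-defending $\alpha$, with attacker $S$ itself, yields $S \pattacks S$, and in the second $S$ $<$-defending $S$ yields $S \pattacks \{\alpha\}$ and hence again $S \pattacks S$, both contradicting $<$-conflict-freeness of $S$. The variant $S \attacks \{\delta\}$ with $\delta \in S$ is even simpler, giving $S \attacks S$ and thus $S \pattacks S$ by Lemma~\ref{lemma:attacks}.

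For $<$-conflict-freeness I would argue by contradiction: assume $S \cup \{\alpha\} \pattacks S \cup \{\alpha\}$. By Lemma~\ref{lemma:attacks} this produces a deduction $C \vdash^R \overline{\gamma}$ with $C \subseteq S \cup \{\alpha\}$ and $\gamma \in S \cup \{\alpha\}$; since $S$ is $<$-conflict-free, hence conflict-free (again by Lemma~\ref{lemma:attacks}), the assumption $\alpha$ must be involved, i.e.\ $\gamma = \alpha$ or $\alpha \in C$. The decisive move is Axiom~\ref{axiom:Contraposition}: when $\gamma \in S$ and $\alpha \in C$, rewriting the deduction at $\alpha$ yields $(C \setminus \{\alpha\}) \cup \{\gamma\} \vdash^{R_\alpha} \overline{\alpha}$ whose support lies entirely in $S$, so $S \attacks \{\alpha\}$, contradicting the engine. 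The case $\gamma = \alpha$ is handled by orienting $C \attacks \{\alpha\}$ via Lemma~\ref{lemma:attacks}, and in the reverse orientation applying Contraposition once more to drive the support into $S$, landing on either $S \attacks \{\alpha\}$ or $S \attacks \{\delta\}$ with $\delta \in S$; each contradicts the engine.

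For self-defence, let $B \pattacks S \cup \{\alpha\}$ and aim at $S \cup \{\alpha\} \pattacks B$. If the witnessing attack is normal it targets a single $\gamma \in S \cup \{\alpha\}$, so $B \pattacks \{\gamma\}$ and the defence of $\gamma$ (or of $\alpha$) gives $S \pattacks B$, whence the claim by Lemma~\ref{lemma:attacks on supersets}. If it is reverse, witnessed by $C \vdash^R \overline{b}$ with $C \subseteq S \cup \{\alpha\}$ and $b \in B$, then for $\alpha \notin C$ we have $C \subseteq S$, so $B \pattacks S$ and the set-defence of $S$ closes the case; for $\alpha \in C$ I would apply Contraposition at $\alpha$ to get $(C \setminus \{\alpha\}) \cup \{b\} \vdash^{R_\alpha} \overline{\alpha}$, orient this attack on $\{\alpha\}$ with Lemma~\ref{lemma:attacks}, and let the defence of $\alpha$ (or of $S$) either produce $S \pattacks B$ directly (when the counter-attack hits $b$) or collapse into the engine's $S \pattacks S$ (when it hits $S$).

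The hard part is precisely this purely reverse case, together with the underlying fact that $<$-defence of a set does not decompose into $<$-defence of its members: the conflicting deduction may genuinely interleave $\alpha$, elements of $S$, and $b \in B$, so neither side's $<$-attack is localized to a singleton. The delicate point is that repeatedly peeling $b$ with Contraposition can re-manufacture a deduction $S \cup \{\alpha\} \vdash \overline{b}$ and hence loop; I expect to break the regress by choosing the witnessing deduction $C \vdash^R \overline{b}$ with $\subseteq$-minimal support, so that each Contraposition step strictly simplifies it and every branch is forced down to a plain attack by $S$ on $\alpha$, on some $\delta \in S$, or on $b$ — at which point the engine and the defence hypotheses on $S$ close the argument.
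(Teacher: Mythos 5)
Your decomposition is sound, and for the first half you actually take a shorter road than the paper. The ``engine'' (impossibility of $S \attacks \{ \alpha \}$ and of $S \attacks \{ \delta \}$ for $\delta \in S$, via Lemma~\ref{lemma:attacks} and the defence hypotheses) is correct, and your $<$-conflict-freeness argument can be closed in a \emph{bounded} number of contraposition steps: when the target $\gamma$ lies in $S$, one application of \contraposition~at $\alpha$ drives the support into $S$ and hits the engine; in the residual case $C' \cup \{ \alpha \} \vdash \overline{\alpha}$ with $C' \subseteq S$, contraposing first at some $\delta \in C'$ and then at $\alpha$ yields $C' \vdash \overline{\alpha}$ with support inside $S$, again the engine (the degenerate $\{ \alpha \} \vdash \overline{\alpha}$ needs two defence steps, but is routine). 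This is genuinely cleaner than the paper, which already for conflict-freeness runs an iterative scheme: it obtains $S \pattacks S' \cup \{ \alpha \}$, observes the attack must be reverse, witnessed by $A_1 \cup \{ \alpha \} \vdash \overline{s_1}$ with some $s_1' \in A_1$, $s_1' < s_1$, and contraposes at a \emph{$\leqslant$-minimal} such culprit, so that the contraposed deduction has no support element $< s_1'$, i.e.~the new attack is \emph{normal}; it then iterates.

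The genuine gap is exactly where you flag it: termination in the self-defence half. Your proposed device---pick the witnessing deduction $C \vdash^R \overline{b}$ with $\subseteq$-minimal support so that ``each Contraposition step strictly simplifies it''---does not work. \contraposition~does not shrink supports: from $A \vdash \overline{\beta}$ it yields a deduction supported by $(A \setminus \{ a \}) \cup \{ \beta \}$, i.e.~it \emph{swaps} the contraposed assumption for the previous target, preserving cardinality; moreover the regress cycles through deductions of different conclusions ($\overline{b}$, then $\overline{\alpha}$ or $\overline{s}$ for $s \in S$, then $\overline{b}$ again) whose supports need not be related by inclusion, so $\subseteq$-minimality of the initial witness constrains none of the later rounds. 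Concretely, from $X \cup \{ \alpha \} \vdash \overline{b}$ you contrapose to $X \cup \{ b \} \vdash \overline{\alpha}$; the backward orientation under Lemma~\ref{lemma:attacks} is a reverse attack witnessed by some possibly \emph{new} deduction $W \cup \{ b \} \vdash \overline{\alpha}$, and contraposing at $b$ returns $W \cup \{ \alpha \} \vdash \overline{b}$ with $W$ unrelated to $X$---a loop your measure cannot break. The paper's fix is order-theoretic rather than inclusion-theoretic: at each round it contraposes at a $\leqslant$-minimal culprit $s_k'$ (the element with $s_k' < s_k$ that makes the attack reverse), which by minimality and transitivity guarantees the contraposed deduction supports a \emph{normal} attack ($x \not< s_k'$ for every $x$ in the new support), and termination follows because deductions are finite and $<$ is asymmetric, so the culprit pairs are eventually exhausted and some $<$-attack $S \cup \{ \alpha \} \pattacks B$ must be normal, closing the defence. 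Replacing your $\subseteq$-minimal-support device by this minimal-culprit contraposition repairs the proof; the rest of your case analysis (normal attacks localize to singletons, reverse attacks with $\alpha \notin C$ reduce to set-defence of $S$) matches the paper and is correct.
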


\begin{proof}
Note that if $\alpha \in S$, then $S \cup \{ \alpha \}$ is trivially $<$-admissible. 
So assume $\alpha \not\in S$ and suppose for a contradiction that $S \cup \{ \alpha \}$ is not $<$-admissible. Then it is either not $<$-conflict-free, or does not $<$-defend itself. 
Suppose first $S \cup \{ \alpha \} \pattacks S \cup \{ \alpha \}$ via either (1) normal or (2) reverse attack. 
We show that either leads to a contradiction. 

1.~$S \cup \{ \alpha \} \pattacks S \cup \{ \alpha \}$ via normal attack. 
As $S$ is $<$-conflict-free and $<$-defends $\alpha$, this $<$-attack must involve $\alpha$. 
I.e.~$S' \cup \{ \alpha \} \vdash^{R} \overline{\beta}$ for some $S' \subseteq S$ and $\beta \in S \cup \{ \alpha \}$, and $\forall s' \in S' \cup \{ \alpha \}$ we find $s' \not< \beta$. 
If $\beta = \alpha$, then $S' \cup \{ \alpha \} \pattacks \{ \alpha \}$, and so $S \pattacks S' \cup \{ \alpha \}$. 
Else, if $\beta \in S'$, then $S' \cup \{ \alpha \} \pattacks S$, and so $S \pattacks S' \cup \{ \alpha \}$ as well. 
We show that we can similarly obtain $S \pattacks S' \cup \{ \alpha \}$ in case (2) too. 

2.~$S \cup \{ \alpha \} \pattacks S \cup \{ \alpha \}$ via reverse attack. 
As in 1., this $<$-attack must involve $\alpha$, i.e. 
$S' \cup \{ \alpha \} \vdash^R \overline{\beta}$ for some $S' \subseteq S$ and $\beta \in S \cup \{ \alpha \}$, and $\exists s' \in S' \cup \{ \alpha \}$ such that $s' < \beta$. 
If $\beta \in S$, then $S \pattacks S' \cup \{ \alpha \}$. 
Else, if $\beta = \alpha$, then $s' \neq \alpha$ (by asymmetry of $<$), and using \contraposition~we find 
$A \vdash^{R'} \overline{s'}$ for $A \subseteq (S' \cup \{ \alpha \}) \setminus \{ s' \}$, so that $S' \cup \{ \alpha \} \attacks S$. 
Then, by Lemma \ref{lemma:attacks}, 
either $S' \cup \{ \alpha \} \pattacks S$ or $S \pattacks S' \cup \{ \alpha \}$, which yields $S \pattacks S' \cup \{ \alpha \}$ in any case. 

In either (1) or (2), $S \pattacks S' \cup \{ \alpha \}$, and as $S$ is $<$-conflict-free and $<$-defends $\alpha$, this $<$-attack must be reverse and involve $\alpha$: 
$A_1 \cup \{ \alpha \} \vdash^{R_1} \overline{s_1}$, $s_1 \in S$, $A_1 \subseteq S'$, and $\exists s'_1 \in A_1$ with $s'_1 < s_1$. 
Without loss of generality take $s'_1$ to be $\leqslant$-minimal such. 
By \contraposition, there is $S_1 \cup \{ \alpha \} \vdash^{R'_1} \overline{s'_1}$ with $S_1 \subseteq (A_1 \setminus \{ s'_1 \}) \cup \{ s_1 \}$ and $\forall x \in S_1~~x \not< s'_1$ (by $\leqslant$-minimality of $s'_1$). 
That is, $S_1 \cup \{ \alpha \} \pattacks A_1$, so we find $S \pattacks S_1 \cup \{ \alpha \}$, again via reverse attack involving $\alpha$: 
$A_2 \cup \{ \alpha \} \vdash^{R_2} \overline{s_2}$, $s_2 \in S$, $A_2 \subseteq S_1$, and $\exists s'_2 \in A_2$ with $s'_2 < s_2$. 
We again impose $\leqslant$-minimality on $s'_2$ and by \contraposition~get
$S_2 \cup \{ \alpha \} \vdash^{R'_2} \overline{s'_2}$, $S_2 \subseteq (A_2 \setminus \{ s'_2 \}) \cup \{ s_2 \}$ and $\forall x \in S_2~~x \not< s'_2$.

As deductions are finite and $<$ asymmetric, the procedure described above will eventually exhaust pairs of $s'_k \in A_k$ and $s_k \in S_k$ such that $s'_k < s_k$, so that $S \pattacks S_k \cup \{ \alpha \}$ will have to be a normal attack, for some $k$. 
But this leads to a contradiction to $S$ being $<$-admissible and $<$-defending $\alpha$.

Hence, by contradiction, $S \cup \{ \alpha \}$ is $<$-conflict-free. 

We now want to show that $S \cup \{ \alpha \}$ $<$-defends itself. So let $B \pattacks S \cup \{ \alpha \}$. 
As $S$ is $<$-admissible and $<$-defends $\alpha$, we consider this $<$-attack to be reverse and involving $\alpha$: 
$S' \cup \{ \alpha \} \vdash^R \overline{\beta_1}$, $S' \subseteq S$, $\beta_1 \in B$, and $\exists s' \in S' \cup \{ \alpha \}$ with $s' < \beta_1$. 
By \contraposition, $S_1 \vdash^{R'_1} \overline{s'}$, $S_1 \subseteq ((S' \cup \{ \alpha \}) \setminus \{ s' \}) \cup \{ \beta_1 \}$. 
Thus, $S_1 \attacks \{ s' \}$, whence $S \cup \{ \alpha \} \pattacks S_1$. 
This $<$-attack cannot be normal on $(S' \cup \{ \alpha \}) \setminus \{ s' \}$, due to $<$-conflict-freeness of $S \cup \{ \alpha \}$; while, if it is normal on $\beta_1$, then $S \cup \{ \alpha \} \pattacks B$, as required. 
Else, $S \cup \{ \alpha \} \pattacks S_1$ via reverse attack: 
$B_1 \vdash^{R_1} \overline{s_1}$, $s_1 \in S \cup \{ \alpha \}$, $B_1 \subseteq S_1$, and $\exists s'_1 \in B_1$ with $s'_1 < s_1$. 
Due to $<$-conflict-freeness of $S \cup \{ \alpha \}$, we find $\beta_1 \in B_1$. Then again, by \contraposition, we find 
$S_2 \vdash^{R'_2} \overline{s'_1}$, $S_2 \subseteq (B_1 \setminus \{ s'_1 \}) \cup \{ s_1 \}$, and $\beta_1 \in S_2$. 
Like with the proof of $<$-conflict-freeness, this process must terminate with a normal attack $S \cup \{ \alpha \} \pattacks B$, so that $S \cup \{ \alpha \}$ eventually $<$-defends itself. 

Finally, given that $S$ $<$-defends $\alpha'$ to begin with, using Lemma \ref{lemma:attacks on supersets} we conclude that $S \cup \{ \alpha \}$ $<$-defends $\alpha'$ too.
\end{proof}

For the rest of this section, we assume that $\abafp$ satisfies \contraposition. 

We can now define the $<$-defence operator $\Def$, inspired by \cite{Dung:1995}. 

\begin{definition}
\label{definition:<-Defence operator}
$\Def: \wp(\mathcal{A}) \to \wp(\mathcal{A})$ is defined as follows: 
for $A \subseteq \mathcal{A}$, $\Def(A) = \{ \alpha \in \mathcal{A}~:~A \text{ $<$-defends } \alpha \}$. 
\end{definition}

By Lemma \ref{lemma:attacks on supersets}, $\Def$ is monotonic: if $A \subseteq B \subseteq \mathcal{A}$, then $\Def(A) \subseteq \Def(B)$. 
Hence, $\Def$ has a unique least fixed point, which is in addition a unique $<$-grounded extension of $\abafp$, as shown next. 

\begin{proposition}
\label{proposition:unique <-grounded}
$\abafp$ admits a unique $<$-grounded extension.
\end{proposition}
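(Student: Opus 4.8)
The plan is to exhibit the least fixed point $G$ of $\Def$---which exists by the Knaster--Tarski theorem, as $\Def$ is monotone on the complete lattice $(\wp(\mathcal{A}), \subseteq)$---and to show that $G$ is itself $<$-complete while being $\subseteq$-contained in every $<$-complete extension; this makes $G$ the $\subseteq$-minimum, hence the unique $\subseteq$-minimal, $<$-complete extension, that is, the unique $<$-grounded extension. The one fact about $<$-complete extensions I would use for the minimality is that each such $E$ is a fixed point of $\Def$: $E$ contains every assumption it $<$-defends, so $\Def(E) \subseteq E$, while conversely every $\alpha \in E$ is $<$-defended by $E$---because any $B \pattacks \{ \alpha \}$ is, by Lemma~\ref{lemma:attacks on supersets}, also a $<$-attack $B \pattacks E$, which $E$'s self-$<$-defence counters---so that $E \subseteq \Def(E)$; hence $E = \Def(E)$.

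The crux is to prove that $G$ is $<$-admissible, and for this I would build $G$ explicitly as the union of the transfinite iteration $G_0 = \emptyset$, $G_{\alpha+1} = \Def(G_\alpha)$, and $G_\lambda = \bigcup_{\beta < \lambda} G_\beta$ at limit ordinals; monotonicity of $\Def$ makes this chain increasing and its stabilisation is the least fixed point $G$. I would then show by transfinite induction that every $G_\alpha$ is $<$-admissible, the base case $G_0 = \emptyset$ being trivial. The heart of the argument is the successor step: assuming $G_\alpha$ is $<$-admissible, show $\Def(G_\alpha)$ is $<$-admissible. Both the required $<$-conflict-freeness and the required set-self-$<$-defence rest on the same device. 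Because deductions are finite trees, any witnessing $<$-attack---a self-attack $\Def(G_\alpha) \pattacks \Def(G_\alpha)$, or an external $B \pattacks \Def(G_\alpha)$ that must be countered---involves only finitely many assumptions $\alpha_1, \ldots, \alpha_n \in \Def(G_\alpha)$, each of which is $<$-defended by $G_\alpha$. Repeatedly applying Lemma~\ref{lemma:Fundamental}, the set $G_\alpha \cup \{ \alpha_1, \ldots, \alpha_n \}$ is $<$-admissible; and by Lemma~\ref{lemma:attacks on supersets} the witnessing attack transfers to this set. For conflict-freeness this directly contradicts its $<$-conflict-freeness; for self-defence, its set-self-$<$-defence supplies a counter-attack that Lemma~\ref{lemma:attacks on supersets} lifts back up to $\Def(G_\alpha)$. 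The limit step follows by the same finiteness, since any offending attack at $G_\lambda$ already lives in some $G_\beta$ with $\beta < \lambda$.

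With admissibility in hand the conclusion is short. Since $G = \Def(G)$ and $G$ is $<$-conflict-free and set-self-$<$-defending, $G$ is $<$-admissible and contains every assumption it $<$-defends (namely $\Def(G) = G$), so $G$ is $<$-complete. Any $<$-complete extension $E$ is, by the first paragraph, a fixed point of $\Def$, so $G \subseteq E$ by leastness of $G$. Thus $G$ is the $\subseteq$-minimum element of the $<$-complete extensions of $\abafp$, and therefore the unique $\subseteq$-minimal one, which is precisely the unique $<$-grounded extension.

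I expect the successor step to be the main obstacle, specifically bridging the gap between the singleton $<$-defence used to define $\Def$ and the set-level $<$-conflict-freeness and set-self-$<$-defence demanded by $<$-admissibility. This is exactly where finiteness of deductions (reducing everything to finitely many $<$-defended assumptions) and Lemma~\ref{lemma:Fundamental} (hence \contraposition) are indispensable, and I anticipate the reverse-attack bookkeeping---tracking which assumption is the less preferred one and ensuring the counter-attack lands inside $\Def(G_\alpha)$---to be the most delicate part.
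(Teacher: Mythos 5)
Your proof is correct and follows essentially the same route as the paper's: both realise the $<$-grounded extension as the least fixed point $G$ of $\Def$ (monotone by Lemma \ref{lemma:attacks on supersets}), invoke Lemma \ref{lemma:Fundamental} to obtain $<$-admissibility of the iterates, and read off $<$-completeness from $G = \Def(G)$ and minimality from leastness. The paper compresses into one line what you spell out---it simply takes $G = \bigcup_{i \in \mathbb{N}} \Def^{\,i}(\emptyset)$ and asserts minimality, whereas you make explicit the transfinite induction, the finiteness-of-deductions argument bridging singleton $<$-defence to set-level $<$-admissibility, and the fact that every $<$-complete extension is a fixed point of $\Def$---so your additional detail only fills gaps the paper elides.
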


\begin{proof}
First, observe that $\emptyset$ is $<$-admissible in $\abafp$. 
The least fixed point $G$ can be given as $\bigcup_{i \in \mathbb{N}} \Def^{~i}(\emptyset)$. 
By Lemma \ref{lemma:Fundamental}, $G$ is $<$-admissible. 
It is clearly $<$-complete (as $G = \Def(G)$) and unique $\subseteq$-minimal such (as the least fixed point). 
Hence, $G$ is a unique $<$-grounded extension of $\abafp$.
\end{proof}

As a consequence of Proposition \ref{proposition:unique <-grounded}, we get the following.

\begin{corollary}
\label{corollary:<-complete exists}
$\abafp$ admits a $<$-complete extension.
\end{corollary}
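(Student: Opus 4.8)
The plan is to read off the result directly from Proposition~\ref{proposition:unique <-grounded} together with the definition of the $<$-grounded semantics. By Definition~\ref{definition:ABA+ semantics}, a $<$-grounded extension is defined to be a $\subseteq$-minimally $<$-complete extension; in particular, any $<$-grounded extension is itself a $<$-complete extension. Hence the existence of a $<$-grounded extension immediately entails the existence of a $<$-complete one.

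Concretely, I would proceed as follows. First, invoke Proposition~\ref{proposition:unique <-grounded} to obtain a (unique) $<$-grounded extension $G$ of $\abafp$; recall from its proof that $G$ can be taken to be the least fixed point $\bigcup_{i \in \mathbb{N}} \Def^{~i}(\emptyset)$ of the $<$-defence operator. Second, observe that by the very definition of the $<$-grounded semantics, $G$ is $\subseteq$-minimally $<$-complete, and therefore $G$ is a $<$-complete extension. This already witnesses that $\abafp$ admits a $<$-complete extension, which is exactly the claim.

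There is essentially no main obstacle here: the corollary is a purely definitional consequence, since $<$-grounded extensions are a special case of $<$-complete extensions. The only point worth making explicit is that Proposition~\ref{proposition:unique <-grounded} guarantees that the $<$-grounded extension actually \emph{exists} (it is not vacuously defined), and this existence is precisely what the construction of $G$ via the monotonic operator $\Def$ and Lemma~\ref{lemma:Fundamental} provides. Consequently, no further argument beyond citing the previous proposition and unfolding the definition is required, and the proof can be stated in a single line.
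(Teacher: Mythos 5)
Your proof is correct and matches the paper's approach exactly: the paper derives this corollary directly from Proposition~\ref{proposition:unique <-grounded}, since by Definition~\ref{definition:ABA+ semantics} a $<$-grounded extension is $\subseteq$-minimally $<$-complete and hence $<$-complete. Your additional remarks on the fixed-point construction of $G$ are accurate but not needed beyond citing the proposition.
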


Using Lemma \ref{lemma:Fundamental}, we can prove the following results. 

\begin{proposition}
\label{proposition:<-preferred always exist}
$\abafp$ admits a $<$-preferred extension. 
\end{proposition}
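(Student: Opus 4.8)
The plan is to apply Zorn's Lemma to the partially ordered set of $<$-admissible extensions ordered by $\subseteq$, since by Definition \ref{definition:ABA+ semantics} a $<$-preferred extension is precisely a $\subseteq$-maximal $<$-admissible set. First I would note that this poset is non-empty: as observed in the proof of Proposition \ref{proposition:unique <-grounded}, $\emptyset$ is $<$-admissible. It then remains to verify the chain condition, namely that every $\subseteq$-chain $\mathcal{C}$ of $<$-admissible extensions has an upper bound that is itself $<$-admissible; the natural candidate is $U = \bigcup \mathcal{C}$.

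The heart of the argument is to show $U$ is $<$-admissible, which splits into $<$-conflict-freeness and $<$-defence of $U$. For $<$-conflict-freeness, I would exploit the fact that any $<$-attack $U \pattacks U$ is witnessed by a deduction $A' \vdash^{R} \overline{\beta}$ (together with the relevant preference side-condition), and that deductions are finite trees; hence only finitely many assumptions of $U$ are involved. Because $\mathcal{C}$ is a $\subseteq$-chain, all these finitely many assumptions lie in a single member $S \in \mathcal{C}$, and the same deduction and preference conditions then witness $S \pattacks S$ (in both the normal and the reverse case of Definition \ref{definition:<-attack}), contradicting the $<$-conflict-freeness of $S$. For $<$-defence, I would argue monotonically: given $\alpha \in U$, there is $S \in \mathcal{C}$ with $\alpha \in S$, and since $S$ is $<$-admissible it $<$-defends $\alpha$; then for any $B \pattacks \{ \alpha \}$ we have $S \pattacks B$, so by Lemma \ref{lemma:attacks on supersets} also $U \pattacks B$, whence $U$ $<$-defends $\alpha$.

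With $U$ shown $<$-admissible it is an upper bound of $\mathcal{C}$ inside the poset, so Zorn's Lemma yields a $\subseteq$-maximal $<$-admissible extension, which is exactly the desired $<$-preferred extension. I expect the main obstacle to be the $<$-conflict-freeness step: one has to check that localising the witnessing deduction to a single chain member preserves the preference side-conditions of Definition \ref{definition:<-attack} in both the normal and reverse cases, so that the localised attack really is a $<$-attack of $S$ on itself and not merely a plain ABA attack (which would not immediately contradict $<$-conflict-freeness). The defence step, by contrast, is routine given Lemma \ref{lemma:attacks on supersets}, and notably the Fundamental Lemma is not required for this particular existence claim, since admissibility of each chain member already localises every conflict and lifts every defence to the union.
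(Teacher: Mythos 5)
Your overall strategy is exactly the paper's: Zorn's Lemma on the poset of $<$-admissible sets ordered by $\subseteq$, with $\emptyset$ as the base point and the union $U$ of a chain as the candidate upper bound, verified $<$-admissible by localising finite witnessing deductions to a single chain member. Your conflict-freeness step is correct, and your closing observation is also correct and in fact sharper than the paper's own wording: the chain argument never genuinely uses Lemma \ref{lemma:Fundamental}, so \contraposition\ plays no essential role in this particular existence claim (the paper cites the Fundamental Lemma only for the trivial fact that the collection of $<$-admissible sets is partially ordered by $\subseteq$).

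However, your $<$-defence step has a genuine gap. By Definition \ref{definition:<-conflict-freeness}, $<$-admissibility of $U$ requires that $U$ $<$-defend the \emph{set} $U$, i.e.\ $U \pattacks B$ for every $B \pattacks U$; you only discharge attacks of the form $B \pattacks \{ \alpha \}$ for individual $\alpha \in U$. Because of reverse attacks this is strictly weaker: a reverse attack $B \pattacks U$ is witnessed by a deduction $U' \vdash^R \contr{b}$ with $b \in B$, $U' \subseteq U$ and some $u' \in U'$ with $u' < b$, and when $U'$ contains two or more assumptions this need not be an attack on any singleton $\{ \alpha \} \subseteq U$ (a reverse attack on $\{ \alpha \}$ would require support inside $\{ \alpha \}$, and no normal attack on any element of $U'$ is given). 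Only normal attacks on $U$ decompose into singleton attacks, since each targets a single $\beta \in U$. The repair is the same finiteness trick you already deploy for conflict-freeness, and is exactly what the paper does (``for any $B \pattacks A$ we have $B \pattacks A_n$, for some $n$''): the deduction witnessing $B \pattacks U$ involves only finitely many assumptions of $U$, which, by the chain property, all lie in a single member $S$ of the chain, so $B \pattacks S$ with the same preference side-condition; then $S \pattacks B$ by $<$-admissibility of $S$, and $U \pattacks B$ by Lemma \ref{lemma:attacks on supersets}.
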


\begin{proof}
By Lemma \ref{lemma:Fundamental}, the collection of $<$-admissible supersets of $\emptyset$ is partially ordered by subset inclusion $\subseteq$, so any sequence $\emptyset \subseteq A_1 \subseteq \ldots \subseteq A_n \subseteq \ldots$ of $<$-admissible sets of assumptions (for $n$ an ordinal) has an upper bound $A = \bigcup_{i \geqslant 0} A_i$. 
Then $A \subseteq \mathcal{A}$ is $<$-admissible: if it were not $<$-conflict-free, then some $A_n$ would not be either; and for any $B \pattacks A$ we have $B \pattacks A_n$, for some $n$, so that $A_n \pattacks B$, and hence $A \pattacks B$ too. 
Since every chain $\emptyset \subseteq A_1 \subseteq \ldots \subseteq A_n \subseteq \ldots$ admits an $<$-admissible upper bound, every such chain has a $\subseteq$-maximally $<$-admissible set of assumptions, according to Zorn's Lemma. 
As $\emptyset$ is $<$-admissible, $\abafp$ admits at least one $\subseteq$-maximally $<$-admissible---i.e.~a $<$-preferred---extension.
\end{proof}

\begin{proposition}
\label{proposition:<-preferred is <-complete}
Every $<$-preferred extension of $\abafp$ is a $<$-complete extension too.
\end{proposition}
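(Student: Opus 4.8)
The plan is to exploit the Fundamental Lemma together with the $\subseteq$-maximality built into the definition of a $<$-preferred extension. Let $E$ be $<$-preferred; by Definition \ref{definition:ABA+ semantics} this means $E$ is $\subseteq$-maximally $<$-admissible, so in particular $E$ is $<$-admissible. To establish that $E$ is $<$-complete, again by Definition \ref{definition:ABA+ semantics} the only outstanding obligation is to show that $E$ contains every assumption it $<$-defends, since the $<$-admissibility half of $<$-completeness is already in hand.

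So I would fix an arbitrary $\alpha \in \mathcal{A}$ that $E$ $<$-defends and aim to conclude $\alpha \in E$. The key move is to invoke Lemma \ref{lemma:Fundamental} with $S := E$ and with the two defended assumptions taken to coincide, $\alpha' := \alpha$. Since $E$ is $<$-admissible and $<$-defends $\alpha$, the hypotheses of the lemma are met (with $\alpha$ and $\alpha'$ being the same assumption), and it yields that $E \cup \{ \alpha \}$ is $<$-admissible.

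Finally I would appeal to maximality: we have $E \subseteq E \cup \{ \alpha \}$ with $E \cup \{ \alpha \}$ $<$-admissible, so the $\subseteq$-maximality of $E$ among $<$-admissible sets forces $E \cup \{ \alpha \} = E$, i.e.~$\alpha \in E$. As $\alpha$ was an arbitrary assumption $<$-defended by $E$, this shows that $E$ contains every assumption it $<$-defends, and therefore $E$ is $<$-complete.

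I do not anticipate a genuine obstacle, because \contraposition~(assumed throughout this section) has already been absorbed into Lemma \ref{lemma:Fundamental}, which does all the heavy lifting. The only point meriting a little care is verifying that the hypotheses of Lemma \ref{lemma:Fundamental} are legitimately satisfied in the degenerate case $\alpha' = \alpha$; but since the lemma's statement permits the two defended assumptions to be equal, this is immediate. It is also worth noting that we use only the ``$E \cup \{ \alpha \}$ is $<$-admissible'' part of the conclusion, not the additional ``$<$-defends $\alpha'$'' clause.
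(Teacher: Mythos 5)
Your proof is correct and follows essentially the same route as the paper: both invoke Lemma \ref{lemma:Fundamental} (with the two defended assumptions coinciding) to obtain that $E \cup \{ \alpha \}$ is $<$-admissible, and then use the $\subseteq$-maximality of the $<$-preferred extension $E$ to force $\alpha \in E$. The only difference is presentational---the paper phrases the argument as a proof by contradiction with $\alpha \in \mathcal{A} \setminus E$, while you argue directly---which is immaterial.
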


\begin{proof}
Let $E$ be a $<$-preferred extension of $\abafp$ and suppose for a contradiction that it is not $<$-complete. Let $E$ $<$-defend some $\alpha \in \mathcal{A} \setminus E$. 
As $E$ is $<$-admissible, $E \cup \{ \alpha \}$ is $<$-admissible, by Lemma \ref{lemma:Fundamental}. 
But then $E$ is not $\subseteq$-maximally $<$-admissible, contrary to $E$ being $<$-preferred. 
Hence, by contradiction, $E$ must be $<$-complete.
\end{proof}

Further, as in ABA, $<$-stable semantics is subsumed by both $<$-preferred and $<$-complete semantics, as shown next.

\begin{proposition}
\label{proposition:<-stable is <-preferred}
Any $<$-stable extension of $\abafp$ is a $<$-preferred extension too.
\end{proposition}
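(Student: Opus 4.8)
The plan is to verify directly the two ingredients a $<$-preferred extension must possess: that a $<$-stable $E$ is $<$-admissible, and that it is $\subseteq$-maximal among $<$-admissible sets. Both steps rest on Lemma \ref{lemma:attacks on supersets} (monotonicity of $\pattacks$ under supersets) combined with the defining feature of $<$-stability, namely that $E \pattacks \{\alpha\}$ for every $\{\alpha\} \subseteq \mathcal{A} \setminus E$.

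For $<$-admissibility, $E$ is $<$-conflict-free by definition, so it only remains to show $E$ $<$-defends each $\alpha \in E$. Given an arbitrary $B \pattacks \{\alpha\}$, I would first argue $B \not\subseteq E$: were $B \subseteq E$, then Lemma \ref{lemma:attacks on supersets} would upgrade $B \pattacks \{\alpha\}$ to $E \pattacks E$, contradicting $<$-conflict-freeness of $E$. Hence $B$ contains some $\beta \in \mathcal{A} \setminus E$; $<$-stability then gives $E \pattacks \{\beta\}$, and a further application of Lemma \ref{lemma:attacks on supersets} lifts this to $E \pattacks B$. Thus $E$ $<$-defends every $\alpha \in E$, so $E$ is $<$-admissible.

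For maximality I would argue by contradiction: suppose some $<$-admissible $E' \supsetneq E$ exists, and pick $\alpha \in E' \setminus E$. Since $\alpha \notin E$, $<$-stability yields $E \pattacks \{\alpha\}$; as $E \subseteq E'$ and $\{\alpha\} \subseteq E'$, Lemma \ref{lemma:attacks on supersets} delivers $E' \pattacks E'$, contradicting the $<$-conflict-freeness of $E'$. Hence no such $E'$ exists, so $E$ is $\subseteq$-maximally $<$-admissible, i.e.\ $<$-preferred.

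The argument is largely routine; the single point to watch is that $<$-stability is stated only for singletons, so each appeal to it must be paired with Lemma \ref{lemma:attacks on supersets} to pass from an attack on one out-of-$E$ assumption to an attack on the entire attacking (respectively, larger) set. It is also worth noting that, in contrast with the preceding propositions of this section, this result appears to need neither \contraposition~nor Lemma \ref{lemma:Fundamental}: only $<$-conflict-freeness and superset-monotonicity of $\pattacks$ are used.
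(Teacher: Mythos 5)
Your proof is correct and takes essentially the same route as the paper's: its two-line argument (``as $E$ $<$-attacks every $\{\beta\} \nsubseteq E$, it must be $\subseteq$-maximally $<$-admissible'') compresses exactly the two steps you spell out, both resting only on $<$-conflict-freeness of ($<$-admissible and $<$-stable) sets together with Lemma \ref{lemma:attacks on supersets}. Your closing remark is also accurate: although the paper's standing assumption in that section is \contraposition, its proof of this proposition, like yours, never invokes it or Lemma \ref{lemma:Fundamental}.
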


\begin{proof}
Let $E$ be a $<$-stable extension of $\abafp$. 
As $E$ $<$-attacks every $\{ \beta \} \nsubseteq E$, it must be $\subseteq$-maximally $<$-admissible. 
Hence, $E$ is $<$-preferred.
\end{proof}

\begin{proposition}
\label{proposition:<-stable is <-complete}
Any $<$-stable extension of $\abafp$ is a $<$-complete extension too.
\end{proposition}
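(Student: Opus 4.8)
The plan is to read the result off directly from the two immediately preceding propositions. By Proposition \ref{proposition:<-stable is <-preferred}, any $<$-stable extension $E$ of $\abafp$ is $<$-preferred, and by Proposition \ref{proposition:<-preferred is <-complete} every $<$-preferred extension is $<$-complete. Composing these two facts yields that $E$ is $<$-complete, so the proof is essentially a single line chaining the two results (both of which rest on Lemma \ref{lemma:Fundamental} and hence on \contraposition, already assumed for this section).

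Should a self-contained argument be preferred, I would instead verify the two defining clauses of $<$-completeness directly. A $<$-stable $E$ is $<$-conflict-free by definition, so it remains to (i) establish $<$-admissibility and (ii) show that $E$ contains every assumption it $<$-defends. For (i), given $B \pattacks \{ \alpha \}$ with $\alpha \in E$, I would first note $B \not\subseteq E$ (otherwise Lemma \ref{lemma:attacks on supersets} would give $E \pattacks E$, contradicting $<$-conflict-freeness), and then pick $\beta \in B \setminus E$; $<$-stability gives $E \pattacks \{ \beta \}$, and Lemma \ref{lemma:attacks on supersets} lifts this to $E \pattacks B$, so $E$ $<$-defends $\alpha$ and is thus $<$-admissible.

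For (ii), I would suppose $E$ $<$-defends some $\alpha \notin E$ and derive a contradiction. Since $\alpha \notin E$, $<$-stability gives $E \pattacks \{ \alpha \}$; instantiating the $<$-defence of $\alpha$ at the attacker $B = E$ then forces $E \pattacks E$, contradicting $<$-conflict-freeness. Hence $E$ contains every assumption it $<$-defends, and together with (i) this makes $E$ $<$-complete.

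There is no real obstacle here: the chained proof is immediate, and the only points requiring a moment's care in the direct version are ensuring $B \not\subseteq E$ in step (i) and legitimately instantiating the $<$-defence condition at $B = E$ in step (ii). I expect the intended proof to be the one-line composition of Propositions \ref{proposition:<-stable is <-preferred} and \ref{proposition:<-preferred is <-complete}.
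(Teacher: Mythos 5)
Your proposal is correct, but your guess about the intended proof is off: the paper does \emph{not} chain Propositions \ref{proposition:<-stable is <-preferred} and \ref{proposition:<-preferred is <-complete}. Instead it gives precisely the direct argument of your step (ii): for $\beta \notin E$, $<$-stability yields $E \pattacks \{\beta\}$, and if $E$ also $<$-defended $\beta$, instantiating the defence condition at the attacker $E$ itself would force $E \pattacks E$, contradicting $<$-conflict-freeness (the paper leaves the $<$-admissibility of $E$ implicit, which your step (i) supplies explicitly via Lemma \ref{lemma:attacks on supersets} --- a small but genuine tightening). The difference between the two routes is not cosmetic: your one-line composition inherits, through Proposition \ref{proposition:<-preferred is <-complete} and hence Lemma \ref{lemma:Fundamental}, a dependence on \contraposition, whereas the direct argument (both the paper's and your self-contained version) uses only $<$-stability, $<$-conflict-freeness and Lemma \ref{lemma:attacks on supersets}, so it holds in \emph{any} \abap\ framework, with or without \contraposition. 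Since \contraposition\ is a blanket assumption in this section of the paper, your chained proof is valid there, but the direct proof is strictly more general and is the better choice --- so your fallback version, not your primary one, is the right answer.
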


\begin{proof}
Let $E$ be a $<$-stable extension of $\abafp$. 
For any $\beta \not\in E$, $<$-stability of $E$ means that $E \pattacks \{ \beta \}$, and if $E$ $<$-defended $\beta$ as well, it would mean that $E \pattacks E$, contradicting its $<$-conflict-freeness. 
Hence, $E$ contains every assumption it $<$-defends, and so is $<$-complete.
\end{proof}

Finally, we consider $<$-ideal semantics.

\begin{proposition}
\label{proposition:unique <-ideal}
$\abafp$ admits a unique $<$-ideal extension.
\end{proposition}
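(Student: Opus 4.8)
The plan is to follow the classical construction of the ideal extension (as for abstract argumentation in \cite{Dung:1995}), adapted to cope with reverse $<$-attacks. Write $\mathcal{S}$ for the collection of all $<$-admissible extensions that are contained in every $<$-preferred extension of $\abafp$. By Proposition \ref{proposition:<-preferred always exist} at least one $<$-preferred extension exists, and since $\emptyset$ is $<$-admissible and trivially contained in every extension, $\emptyset \in \mathcal{S}$, so $\mathcal{S} \neq \emptyset$. The $<$-ideal extension is by definition a $\subseteq$-maximal member of $\mathcal{S}$, so it suffices to show that $\mathcal{S}$ has a unique $\subseteq$-maximal element; I will do this by proving that $E^* = \bigcup \mathcal{S}$ itself lies in $\mathcal{S}$, whence $E^*$ is the $\subseteq$-greatest, and thus unique $\subseteq$-maximal, element of $\mathcal{S}$.

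Three things must be checked about $E^*$. First, $E^* \subseteq P$ for every $<$-preferred extension $P$, which is immediate since each member of $\mathcal{S}$ is. Second, $E^*$ is $<$-conflict-free: fixing any $<$-preferred $P$ we have $E^* \subseteq P$, and since $P$ is $<$-conflict-free, Lemma \ref{lemma:attacks on supersets} forbids $E^* \pattacks E^*$ (it would force $P \pattacks P$). Third, and crucially, $E^*$ must $<$-defend itself. Here I would first observe that set-$<$-defence of an admissible set descends to its elements: if $A \in \mathcal{S}$ and $\alpha \in A$, then any $C \pattacks \{ \alpha \}$ also satisfies $C \pattacks A$ (directly for normal attacks, and for reverse ones by taking $\{ \alpha \}$ as the witnessing subset of $A$), so $A \pattacks C$ and hence, by Lemma \ref{lemma:attacks on supersets}, $E^* \pattacks C$. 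Thus $E^*$ $<$-defends every one of its elements.

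The main obstacle is to lift this element-wise $<$-defence to set-$<$-defence of $E^*$, i.e.\ to handle an arbitrary $B \pattacks E^*$. The normal case reduces at once to an attack $B \pattacks \{ c \}$ on a single $c \in E^*$, which $E^*$ already $<$-defends. The difficulty is a reverse attack, in which a single finite deduction $(E^*)' \vdash^R \overline{b}$ with $(E^*)' \subseteq E^*$ and $b \in B$ (with some assumption of $(E^*)'$ strictly less preferred than $b$) may draw its assumptions from several distinct members of $\mathcal{S}$ at once, so that it need not correspond to a reverse attack on any singleton. To overcome this I would replay the iterative argument of Lemma \ref{lemma:Fundamental}: using \contraposition~to turn the deduction of $\overline{b}$ into a deduction of the contrary of a $\leqslant$-minimal offending assumption, alternately invoking Lemma \ref{lemma:attacks} and Contraposition, and appealing to the finiteness of deductions together with the asymmetry of $<$ to guarantee that the process terminates in a normal attack of $E^*$ on $B$. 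This yields $E^* \pattacks B$, so $E^*$ $<$-defends itself and is $<$-admissible.

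Putting these together, $E^*$ is $<$-admissible and contained in every $<$-preferred extension, hence $E^* \in \mathcal{S}$; as it contains every member of $\mathcal{S}$ it is the unique $\subseteq$-maximal such set, i.e.\ the unique $<$-ideal extension of $\abafp$. I expect step three---upgrading element-wise $<$-defence of $E^*$ to set-$<$-defence against reverse attacks spanning several admissible sets---to be the genuinely delicate part, precisely the point at which \abap~departs from abstract argumentation and where Contraposition is indispensable; the remaining bookkeeping (conflict-freeness and containment) is routine.
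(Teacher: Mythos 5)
Your proposal is correct, and at the top level it shares the paper's strategy: both reduce uniqueness to showing that unions of $<$-admissible sets contained in all $<$-preferred extensions stay $<$-admissible. But your decomposition genuinely differs in two respects. The paper takes the intersection $S$ of the $<$-preferred extensions, proves that the union of any \emph{two} $<$-admissible subsets $I, I'$ of $S$ is $<$-admissible, and deduces there is only one $\subseteq$-maximal such subset---leaving the existence of a maximal one implicit (it needs the Zorn-style chain argument of Proposition \ref{proposition:<-preferred always exist}). You instead show the union $E^*$ of \emph{all} of $\mathcal{S}$ is itself in $\mathcal{S}$, producing a $\subseteq$-greatest element and so getting existence and uniqueness in one stroke, with no appeal to Zorn. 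Moreover, the paper settles admissibility of $I \cup I'$ with the words ``by Lemma \ref{lemma:Fundamental}'', yet that lemma's statement does not literally apply: to adjoin $\alpha \in I'$ to $I$ one would need $I$ itself to $<$-defend $\alpha$, which neither set's admissibility guarantees; what is really used is the lemma's \emph{proof technique}---upgrading element-wise $<$-defence plus $<$-conflict-freeness to set-level $<$-defence against reverse attacks via the alternating Contraposition/Lemma \ref{lemma:attacks} iteration. You isolate exactly this gap and replay the iteration explicitly, at the same (slightly informal) level of rigor as the paper's own termination argument, so your write-up is more self-contained precisely where the paper is terse.
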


\begin{proof}
From Proposition \ref{proposition:<-preferred always exist} we know that $\abafp$ admits $<$-preferred extensions, so let $S$ be their intersection. 
If $S = \emptyset$, then it is $<$-admissible, and so an $<$-ideal extension (unique). 
If $S \neq \emptyset$ is $<$-admissible, then it is an $<$-ideal extension (unique as well). 
Else, assume $S \neq \emptyset$ is not $<$-admissible. 
Then its $\subseteq$-maximally $<$-admissible subsets $I \subsetneq S$ are $<$-ideal extensions of $\abafp$. 
Suppose $I$ and $I'$ are two distinct $<$-admissible subsets of $S$. 
Then their union $I \cup I'$ is a subset of $S$ too, and so $<$-conflict-free. 
By Lemma \ref{lemma:Fundamental}, $I \cup I'$ $<$-defends its assumptions, so must be $<$-admissible. 
Consequently, there can be only one $\subseteq$-maximally $<$-admissible subset of $S$, i.e.~$\abafp$ has a unique $<$-ideal extension. 
\end{proof}

\begin{proposition}
\label{proposition:<-ideal is <-complete}
Any $<$-ideal extension of $\abafp$ is a $<$-complete extension too.
\end{proposition}

\begin{proof}
By Proposition \ref{proposition:unique <-ideal}, it has a unique $<$-ideal extension $I$. Suppose for a contradiction that $I$ is not $<$-complete. 
Then there is $\alpha \in \mathcal{A} \setminus I$ $<$-defended by $I$. 
Such $\alpha$ must be contained in the intersection $S$ of $<$-preferred extensions of $\abafp$, 
because $I \subseteq S$ $<$-defends $\alpha$ and every $<$-preferred extension $E$ of $\abafp$ is $<$-complete (by Proposition \ref{proposition:<-preferred is <-complete}). 
But then, $I \cup \{ \alpha \}$ is $<$-admissible, according to Lemma \ref{lemma:Fundamental}, so that $I$ is not $<$-ideal---a contradiction. 
Therefore, $I$ must be $<$-complete.
\end{proof}

These properties that \abap~exhibits in the presence of Contraposition will be used to show, in the coming sections, that \abap~satisfies certain principles of preference handling and non-monotonic reasoning.

\section{Preference Handling Properties}
\label{sec:Preference Handling}

Referring to \cite{Amgoud:Vesic:2009}, 
in \cite{Brewka:Truszczynski:Woltran:2010} the authors hinted at two (arguably) desirable properties of argumentation formalisms dealing with preferences, that concern conflict preservation and the absence of preferences. 
In the next two subsections we indicate that \abap~satisfies those properties, 
and in the following subsections show that other (arguably) desirable properties of preference handling are too satisfied by \abap.

\subsection{Conflict Preservation}
\label{subsec:Conflict Preservation}

The first property insists that extensions returned after accounting for preferences should be conflict-free with respect to attack relation not taking into account preferences. 
We formulate it as a principle applicable to \abap~as follows.

\begin{principle}
\label{principle:Conflict Preservation}
 $\abafp$ fulfils \textbf{\conflict} for $<$-$\sigma$ semantics if for all $<$-$\sigma$ extensions $E \subseteq \mathcal{A}$ of $\abafp$, for any $\alpha, \beta \in \mathcal{A}$, 
$\{ \alpha \} \attacks \{ \beta \}$ implies that either $\alpha \not\in E$ or $\beta \not\in E$. 
\end{principle}

In \cite{Cyras:Toni:2016-KR} it was shown that Lemma \ref{lemma:attacks} guarantees the following result. 

\begin{proposition}
\label{proposition:<-conflict-free iff conflict-free}
$E \subseteq \mathcal{A}$ is conflict-free in $\abaf$ iff $E$ is $<$-conflict-free in $\abafp$.
\end{proposition}

Consequently, \abap~ensures conflict preservation:

\begin{proposition}
\label{proposition:Conflict Preservation}
$\abafp$ fulfils \conflict~for any semantics $<$-$\sigma$.
\end{proposition}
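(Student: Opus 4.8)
The plan is to derive Conflict Preservation directly from the characterisation of $<$-conflict-freeness in Proposition \ref{proposition:<-conflict-free iff conflict-free}. The key observation is that every $<$-$\sigma$ extension is, by definition (Definition \ref{definition:ABA+ semantics}), a $<$-conflict-free set, and Proposition \ref{proposition:<-conflict-free iff conflict-free} tells us that $<$-conflict-freeness in $\abafp$ is equivalent to ordinary conflict-freeness in the underlying $\abaf$. So the heart of the argument is simply to translate ``$E$ is conflict-free in $\abaf$'' into the contrapositive statement required by \principleI~Definition \ref{principle:Conflict Preservation}.

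Concretely, I would first let $E \subseteq \mathcal{A}$ be an arbitrary $<$-$\sigma$ extension of $\abafp$, for any $\sigma \in \{$\semantics$\}$. By Definition \ref{definition:ABA+ semantics}, every $<$-$\sigma$ extension is $<$-conflict-free, so $E \npattacks E$. Applying Proposition \ref{proposition:<-conflict-free iff conflict-free}, it follows that $E$ is conflict-free in the underlying ABA framework $\abaf$, i.e.~$E \nattacks E$. Next I would take any $\alpha, \beta \in \mathcal{A}$ and suppose $\{ \alpha \} \attacks \{ \beta \}$. I would argue by contradiction: if both $\alpha \in E$ and $\beta \in E$, then $\{ \alpha \} \subseteq E$ and $\{ \beta \} \subseteq E$, so by Lemma \ref{lemma:attacks on supersets} applied to the (unqualified) attack relation---or more directly by monotonicity of $\attacks$ over supersets---we would get $E \attacks E$, contradicting conflict-freeness of $E$ in $\abaf$. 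Hence either $\alpha \not\in E$ or $\beta \not\in E$, which is exactly what \conflict~requires.

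The main obstacle, such as it is, is a minor technical one: Lemma \ref{lemma:attacks on supersets} is stated for the $<$-attack relation $\pattacks$, not for the plain attack relation $\attacks$, so I cannot invoke it verbatim. Instead I would note that monotonicity of the plain attack relation over supersets is immediate from Definition \ref{definition:Assumption-level attack}: if there is a deduction $\{ \alpha \} \vdash^R \overline{\beta}$ witnessing $\{ \alpha \} \attacks \{ \beta \}$, then the very same deduction, now supported by $\{ \alpha \} \subseteq E$ and targeting $\beta \in E$, witnesses $E \attacks E$. This keeps the argument self-contained and avoids any appeal to a lemma that is phrased only for $\pattacks$.

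In short, the proof is essentially a one-line consequence of Proposition \ref{proposition:<-conflict-free iff conflict-free} together with the trivial monotonicity of plain ABA attacks over supersets; there is no genuinely hard step, and the only care needed is to distinguish the $\attacks$ and $\pattacks$ relations when invoking earlier lemmas.
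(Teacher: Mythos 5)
Your proof is correct and takes essentially the same route as the paper's: both pivot on Proposition \ref{proposition:<-conflict-free iff conflict-free} plus monotonicity of attacks over supersets, the only (cosmetic) difference being that the paper transfers non-conflict-freeness of $\{\alpha,\beta\}$ from $\attacks$ to $\pattacks$ and then enlarges to $E$ on the $\pattacks$ side (implicitly via Lemma \ref{lemma:attacks on supersets}), whereas you transfer $E$'s $<$-conflict-freeness to plain conflict-freeness and enlarge on the $\attacks$ side, reading the needed monotonicity directly off Definition \ref{definition:Assumption-level attack}. Your care in noting that Lemma \ref{lemma:attacks on supersets} is stated only for $\pattacks$ is sound (just note the principle you verify is \conflict, not \principleI, a naming slip in your write-up only).
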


\begin{proof}
Let $E$ be a $<$-$\sigma$ extension of $\abafp$, 
If $\alpha, \beta \in E$ and $\{ \alpha \} \attacks \{ \beta \}$, 
then $\{ \alpha, \beta \}$ is not conflict-free, and hence not $<$-conflict-free, by Proposition \ref{proposition:<-conflict-free iff conflict-free}. 
But then $E$ is not $<$-conflict-free either, which is a contradiction. 
Thus, either one of $\alpha$ and $\beta$ does not belong to $E$.
\end{proof}

\subsection{Empty Preferences}
\label{subsec:Empty Preferences}

The second property insists that if there are no preferences, then the extensions returned using a preference handling mechanism should be the same as those obtained without accounting for preferences. 
We formulate it as a principle applicable to \abap~as follows.

\begin{principle}
\label{principle:Empty Preferences}
Suppose that the preference relation $\leqslant$ in $\abafp$ is the strict empty ordering $\emptyset$. 
Then $\abafe$ fulfils \textbf{\emptypref} for $\emptyset$-$\sigma$ semantics if for all $\emptyset$-$\sigma$ extensions $E \subseteq \mathcal{A}$ of $\abafe$, 
$E$ is a $\sigma$ extension of $\abaf$.
\end{principle}

In \cite{Cyras:Toni:2016-KR} the following result was shown to hold.

\begin{theorem}
\label{theorem:ABA+ extends ABA}
$E \subseteq \mathcal{A}$ is a $\sigma$-extension of $\abaf$ iff $E$ is an $\emptyset$-$\sigma$ extension of $\abafe$.
\end{theorem}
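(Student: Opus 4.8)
The plan is to reduce the entire biconditional to a single observation: when $\leqslant = \emptyset$, the $<$-attack relation $\pattacks$ coincides with the ordinary ABA attack relation $\attacks$. Once this is in hand, every notion on which the semantics are built---conflict-freeness, defence, admissibility, and the five semantics themselves---is defined by a clause that is \emph{literally identical} in $\abaf$ and $\abafe$ after substituting one relation for the other, so the two families of extensions must coincide set-for-set.

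First I would establish the attack coincidence. Since $\leqslant = \emptyset$, its strict counterpart $<$ is empty: there are no $\alpha, \beta \in \mathcal{A}$ with $\alpha < \beta$. Inspecting Definition \ref{definition:<-attack}, the side condition of a normal attack, namely that $\nexists \alpha' \in A'$ with $\alpha' < \beta$, then holds vacuously, so a normal $<$-attack from $A$ on $B$ exists exactly when there is a deduction $A' \vdash^{R} \overline{\beta}$ with $\beta \in B$ and $A' \subseteq A$---which is precisely the definition of $A \attacks B$ in Definition \ref{definition:Assumption-level attack}. Dually, the side condition of a reverse attack, that $\exists \beta' \in B'$ with $\beta' < \alpha$, can never be met, so no reverse attacks arise. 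Hence $A \pattacks B$ iff $A \attacks B$ for all $A, B \subseteq \mathcal{A}$.

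Next I would propagate this coincidence upward. From $\pattacks~\!\!=\attacks$ it is immediate that $E$ is $<$-conflict-free iff conflict-free (both amount to $E \nattacks E$), that $E$ $<$-defends $\alpha$ iff $E$ defends $\alpha$, and therefore that $E$ is $<$-admissible iff admissible. I would then go through Definitions \ref{definition:ABA semantics} and \ref{definition:ABA+ semantics} clause by clause: for $<$-stable the requirement $E \pattacks \{ \beta \}$ for every $\{ \beta \} \subseteq \mathcal{A} \setminus E$ becomes the stable clause; for $<$-complete the phrase ``$<$-admissible and contains every assumption it $<$-defends'' becomes the complete clause; and for $<$-preferred and $<$-grounded the $\subseteq$-maximal and $\subseteq$-minimal qualifiers range over the very same collections of admissible and complete sets, so the extremal elements match. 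For $<$-ideal I would invoke the already-established coincidence of $<$-preferred with preferred, so that ``$\subseteq$-maximal $<$-admissible and contained in all $<$-preferred extensions'' is exactly the requirement ``$\subseteq$-maximal admissible and contained in all preferred extensions'', with no appeal to uniqueness required. Concluding that $E$ is an $\emptyset$-$\sigma$ extension iff a $\sigma$ extension then follows for each $\sigma$.

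The only step requiring genuine care is the vacuous-quantifier reasoning of the second paragraph---one must state correctly that the normal-attack condition is vacuously satisfied while the reverse-attack condition vacuously fails---together with the $<$-ideal case, the sole semantics whose definition refers to another semantics and which is therefore handled last, after $<$-preferred and preferred have been matched. No deeper obstacle arises, since once the attack relations are shown to be identical the whole argument is a substitution of one relation for the other throughout the definitions; in particular \contraposition~is nowhere needed here, so the result holds for arbitrary underlying $\abaf$.
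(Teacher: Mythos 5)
Your proof is correct and takes essentially the same route as the paper's: the theorem is imported from the \abap\ paper \cite{Cyras:Toni:2016-KR}, whose argument likewise rests on the observation that an empty strict relation $<$ makes the side condition of normal $<$-attacks vacuously true and that of reverse attacks vacuously false, so that $\pattacks$ and $\attacks$ coincide (this is precisely the attack-coincidence lemma underlying the paper's citation), after which the coincidence of all five semantics follows definitionally. Your clause-by-clause propagation, including handling $<$-ideal last via the already-matched preferred extensions and noting that \contraposition\ is nowhere needed, matches the intended reasoning.
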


This theorem, in addition to saying that \abap~is a conservative extension of ABA, immediately yields the satisfaction of the principle in question: 

\begin{proposition}
\label{proposition:Empty Preferences}
$\abafe$ fulfils \emptypref~for any semantics $\emptyset$-$\sigma$.
\end{proposition}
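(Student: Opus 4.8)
The plan is to read off the result directly from Theorem~\ref{theorem:ABA+ extends ABA}, which has just established the full equivalence between $\sigma$-extensions of $\abaf$ and $\emptyset$-$\sigma$ extensions of $\abafe$. Fulfilling \emptypref~for $\emptyset$-$\sigma$ semantics, as phrased in Principle~\ref{principle:Empty Preferences}, asks only that every $\emptyset$-$\sigma$ extension of $\abafe$ also be a $\sigma$ extension of $\abaf$; this is literally one of the two directions of that biconditional.

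Concretely, I would take an arbitrary $\emptyset$-$\sigma$ extension $E \subseteq \mathcal{A}$ of $\abafe$ and apply Theorem~\ref{theorem:ABA+ extends ABA} --- specifically the implication from $E$ being an $\emptyset$-$\sigma$ extension of $\abafe$ to $E$ being a $\sigma$ extension of $\abaf$ --- to conclude that $E$ is a $\sigma$ extension of $\abaf$. Since $E$ was arbitrary, this establishes the defining condition of \emptypref. As the equivalence holds uniformly for each $\sigma \in \{$\semantics$\}$, the same argument covers every $\emptyset$-$\sigma$ semantics, so $\abafe$ fulfils \emptypref~for any $\emptyset$-$\sigma$.

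There is no genuine obstacle at the level of this proposition: all the substantive work has been delegated to Theorem~\ref{theorem:ABA+ extends ABA}. The only point worth flagging is that the half of the theorem we invoke is its ``conservative extension'' direction, namely that accounting for the (empty) preferences in $\abafe$ introduces no spurious extensions beyond those of the preference-free $\abaf$. I expect this to rest on the collapse of the $<$-attack relation $\pattacks$ to the plain ABA attack relation $\attacks$ when $\leqslant$ is empty, whence $<$-conflict-freeness and $<$-defence coincide with their unprefixed counterparts and all semantics agree; but as this lies inside the cited theorem, I would invoke it rather than reprove it.
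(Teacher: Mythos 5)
Your proposal is correct and matches the paper exactly: the paper offers no separate proof, stating only that Theorem~\ref{theorem:ABA+ extends ABA} ``immediately yields'' the proposition, which is precisely the one direction of the biconditional you invoke. Your closing remark about $\pattacks$ collapsing to $\attacks$ under empty preferences is also the right intuition for the cited theorem, and you correctly leave it inside that citation rather than reproving it.
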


\subsection{Maximal Elements}
\label{subsec:Maximal Elements}

\cite{Amgoud:Vesic:2014} proposed a property concerning inclusion in extensions of the `strongest' arguments, i.e.~arguments that are maximal w.r.t.~preference ordering. We next reformulate the property to be applicable to \abap.

\begin{principle}
\label{principle:Maximal Elements}
Suppose the preference ordering $\leqslant$ of $\abafp$ is total
and further assume that the set $M = \{ \alpha \in \mathcal{A} : \nexists \beta \in \mathcal{A}$ with $\alpha < \beta \}$ is $<$-conflict-free.
 $\abafp$ fulfils \textbf{\maximal} for $<$-$\sigma$ semantics if for all $<$-$\sigma$ extensions $E \subseteq \mathcal{A}$ of $\abafp$, it holds that $M \subseteq E$.
\end{principle}

As an illustration, in Example \ref{example:reverse attack}, $\alpha$ is a unique $\leqslant$-maximal element in $\mathcal{A}$, 
and $\{ \alpha \}$ is a unique $<$-$\sigma$ extension of $\abafp$, 
whence $\abafp$ fulfils \maximal~for any semantics $<$-$\sigma$. 

Our next result shows that in general, \abap~satisfies this principle under $<$-stable and $<$-complete semantics. 

\begin{proposition}
\label{proposition:Maximal Elements}
$\abafp$ fulfils \maximal~for $<$-stable and $<$-complete semantics.
\end{proposition}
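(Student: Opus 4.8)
The plan is to isolate one key claim from which both semantics follow at once: under the standing hypotheses of the principle (that $\leqslant$ is total and that $M$ is $<$-conflict-free), no set $B \subseteq \mathcal{A}$ $<$-attacks any singleton $\{\alpha\}$ with $\alpha \in M$. First I would record an elementary order-theoretic fact. Since $\leqslant$ is total and every $\alpha \in M$ is $\leqslant$-maximal, each $\beta \in \mathcal{A}$ is either $\leqslant$-equivalent to $\alpha$ (both $\alpha \leqslant \beta$ and $\beta \leqslant \alpha$) or strictly below it ($\beta < \alpha$); consequently any assumption $\leqslant$-equivalent to a $\leqslant$-maximal element is itself $\leqslant$-maximal and so lies in $M$.

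To prove the claim I would analyse the two ways $B$ could $<$-attack $\{\alpha\}$ for $\alpha \in M$. A reverse attack is impossible, since it would require a deduction, supported by a subset of $\{\alpha\}$, of the contrary of some $c \in B$ using a supporter strictly less preferred than $c$; this forces $\alpha < c$, contradicting the maximality of $\alpha$. A normal attack would give a deduction $B' \vdash^R \overline{\alpha}$ with $B' \subseteq B$ and no $b' \in B'$ satisfying $b' < \alpha$. By the order-theoretic fact, each such $b'$ is then $\leqslant$-equivalent to $\alpha$ and hence belongs to $M$, so $B' \subseteq M$. But then $B' \pattacks \{\alpha\}$ is itself a normal attack with $B' \subseteq M$ and $\{\alpha\} \subseteq M$, so Lemma~\ref{lemma:attacks on supersets} yields $M \pattacks M$, contradicting the assumed $<$-conflict-freeness of $M$. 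Hence no $B$ $<$-attacks $\{\alpha\}$.

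With the claim established, both cases are short. For a $<$-stable extension $E$: were some $\alpha \in M$ outside $E$, $<$-stability would force $E \pattacks \{\alpha\}$, contradicting that $\{\alpha\}$ is $<$-unattacked; hence $M \subseteq E$. For a $<$-complete extension $E$: because $\{\alpha\}$ is $<$-unattacked, $E$ vacuously $<$-defends every $\alpha \in M$, and since a $<$-complete extension contains every assumption it $<$-defends, we again obtain $M \subseteq E$.

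I expect the main obstacle to be the normal-attack step, specifically the passage from ``no supporter of the deduction is strictly below $\alpha$'' to ``the entire support lies in $M$''. This is exactly where totality is genuinely needed, and where the fact that $\leqslant$ is a \emph{preorder} rather than a partial order must be handled with care: antisymmetry is unavailable, so the argument has to proceed through $\leqslant$-equivalence classes and the observation that $\leqslant$-equivalents of maximal elements are themselves maximal. Everything else---the impossibility of the reverse attack and the two concluding deductions---is routine once the claim is in hand.
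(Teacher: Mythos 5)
Your proof is correct and takes essentially the same route as the paper's: both first establish that every $\{\alpha\}$ with $\alpha \in M$ is $<$-unattacked---ruling out reverse attacks via $\leqslant$-maximality and normal attacks by using totality and maximality to force the attacking support into $M$, contradicting its $<$-conflict-freeness---and then conclude $M \subseteq E$ for $<$-stable and $<$-complete extensions exactly as you do. Your explicit statement of the order-theoretic fact about $\leqslant$-equivalence classes and the appeal to Lemma~\ref{lemma:attacks on supersets} merely make precise steps the paper's proof carries out inline.
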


\begin{proof}
Let the preference ordering $\leqslant$ of $\abafp$ be total
and suppose $M = \{ \alpha \in \mathcal{A} : \nexists \beta \in \mathcal{A}$ with $\alpha < \beta \}$ is $<$-conflict-free. We first show that $M$ is not $<$-attacked.

Fix $\alpha \in M$ and suppose for a contradiction that for some $S \subseteq \mathcal{A}$ it holds that $S \pattacks \{ \alpha \}$. 
So either (i) $\exists B \vdash^R \overline{\alpha}$ with $B \subseteq S$ and $\forall \beta \in B~~\alpha \leqslant \beta$ or $\beta \nleqslant \alpha$, 
or (ii) $\{ \alpha \} \vdash^R \overline{\beta}$ for some $\beta \in S$ with $\alpha < \beta$. 
Note that the case (ii) cannot happen, because $\alpha$ is $\leqslant$-maximal. So consider case (i).
Since $\leqslant$ is total, it follows that $\alpha \leqslant \beta~~\forall \beta \in B$. 
But as $\alpha$ is $\leqslant$-maximal, it must also hold that $\beta \leqslant \alpha$, for any $\beta \in B$. 
From here, we show $B \subseteq M$. Indeed, fix $\beta \in B$ and assume for a contradiction that $\beta \not\in M$. 
Then $\exists \gamma \in \mathcal{A}$ such that $\beta < \gamma$. By transitivity, $\alpha < \gamma$, contradicting $\alpha$'s $\leqslant$-maximality. 
So we must have $\beta \in M$, and consequently, $B \subseteq M$. 

But now, since $\alpha \in M,~B \subseteq M$ and $B \pattacks \{ \alpha \}$, this contradicts $<$-conflict-freeness of $M$. 
Therefore, by contradiction, $S \npattacks \{ \alpha \}$, for any $S \subseteq \mathcal{A}$. 
Since $\alpha \in M$ was arbitrary, we have $M$ $<$-unattacked, as required.

If $\abafp$ admits no $<$-stable or $<$-complete extensions, then the principle is fulfilled trivially. Otherwise, let $E \subseteq \mathcal{A}$ be $<$-stable in $\abafp$. 
Pick $\alpha \in M$ and suppose for a contradiction that $\alpha \not\in E$. Then $E \pattacks \{ \alpha \}$, which is a contradiction.
Thus, $\alpha \in S$, and hence $M \subseteq S$.

Now let $E$ be a $<$-complete extension of $\abafp$ and suppose for a contradiction $M \nsubseteq E$. Then $E$ does not $<$-defend some $\alpha \in M$. 
This means that $S \pattacks M$ for some $S \subseteq \mathcal{A}$, which is a contradiction. Hence, $M \subseteq E$.
\end{proof}

This principle may, however, be violated under, say, $<$-preferred semantics: 
in Example \ref{example:fundamental}, the framework $\abafp$ to begin with,
admits $\{ \alpha, \beta \}$ as a $<$-preferred extension, 
while $\gamma \not\in \{ \alpha, \beta \}$ is a $\leqslant$-maximal element. 
However, assuming Contraposition, \maximal~is satisfied under the remaining semantics too.

\begin{corollary}
\label{corollary:Maximal Elements for <-preferred}
If $\abafp$ satisfies \contraposition, then it fulfils \maximal~for $<$-preferred/$<$-ideal/$<$-grounded semantics.
\end{corollary}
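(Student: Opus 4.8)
The plan is to reduce all three cases to $<$-complete semantics, for which Proposition \ref{proposition:Maximal Elements} already establishes the containment $M \subseteq E$. The crucial ingredient inherited from the proof of that proposition is that, under the standing hypotheses that $\leqslant$ is total and $M$ is $<$-conflict-free, every singleton $\{ \alpha \}$ with $\alpha \in M$ is $<$-unattacked. Consequently any $<$-complete extension $E$ vacuously $<$-defends each such $\alpha$, and hence contains it, so that $M \subseteq E$. I would stress that Proposition \ref{proposition:Maximal Elements} itself does not rely on Contraposition; Contraposition enters only through the subsumption results that connect the three target semantics to $<$-complete semantics.

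First I would dispatch $<$-preferred and $<$-ideal semantics. Since $\abafp$ satisfies \contraposition, Proposition \ref{proposition:<-preferred is <-complete} gives that every $<$-preferred extension is $<$-complete, and Proposition \ref{proposition:<-ideal is <-complete} gives the same for the ($<$-)ideal extension. Combined with Proposition \ref{proposition:Maximal Elements}, which yields $M \subseteq E$ for every $<$-complete $E$, this immediately delivers $M \subseteq E$ for any $<$-preferred or $<$-ideal extension.

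Next I would treat $<$-grounded semantics. By definition a $<$-grounded extension is $\subseteq$-minimally $<$-complete, and under Contraposition Proposition \ref{proposition:unique <-grounded} guarantees that it exists and is unique. Being $<$-complete, it again contains $M$ by Proposition \ref{proposition:Maximal Elements}; minimality poses no obstacle precisely because $M$ is $<$-unattacked, so $M$ is forced into even the smallest $<$-complete extension.

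I do not expect a genuine obstacle in this corollary: the substantive work lies in Proposition \ref{proposition:Maximal Elements} (the $<$-unattackedness of $M$) and in the Contraposition-dependent subsumption propositions, all of which may be invoked directly. The only point meriting a moment's care is that \maximal~is a universally quantified statement over extensions, hence fulfilled regardless of whether extensions exist; existence, where one wishes to note it, is in any case supplied by Propositions \ref{proposition:<-preferred always exist}, \ref{proposition:unique <-ideal} and \ref{proposition:unique <-grounded}.
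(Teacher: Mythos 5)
Your proposal is correct and coincides with the paper's own proof, which likewise derives the corollary directly from Propositions \ref{proposition:unique <-grounded}, \ref{proposition:<-preferred is <-complete}, \ref{proposition:<-ideal is <-complete} and \ref{proposition:Maximal Elements}, using Contraposition only through these subsumption results. Your additional observations (that Proposition \ref{proposition:Maximal Elements} itself needs no Contraposition, and that the principle holds vacuously absent extensions) are accurate but not needed beyond what the paper's one-line proof already invokes.
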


\begin{proof}
Follows from Propositions \ref{proposition:unique <-grounded}, \ref{proposition:<-preferred is <-complete}, \ref{proposition:<-ideal is <-complete} and \ref{proposition:Maximal Elements}.
\end{proof}

\subsection{Principle I}
\label{subsec:Principle I}

\cite{Brewka:Eiter:2000} formulated a principle for sound extension-based default reasoning with preferences, which we reformulate for \abap~next.

\begin{principle}
\label{principle:Preferences}
$\abafp$ fulfils \textbf{\principleI} for $<$-$\sigma$ semantics 
if for all $E, E' \subseteq \mathcal{A}$ such that $E = E_0 \cup \{ \alpha \}$ and $E' = E_0 \cup \{ \alpha' \}$ for some $E_0 \subseteq \mathcal{A}$, 
with $\alpha, \alpha' \not\in E_0$ and $\alpha' < \alpha$, 
it holds that if $E$ is a $<$-$\sigma$ extension of $\abafp$, then $E'$ is not a $<$-$\sigma$ extension of $\abafp$.
\end{principle}

This principle insists that if two coherent viewpoints of a situation differ only in that each of them contains a single assumption not contained in the other, then the viewpoint with the more preferred assumption should be chosen. 
\abap~satisfies this principle under $<$-stable semantics.

\begin{proposition}
\label{proposition:<-stable satisfies Principle I}
$\abafp$ fulfils \principleI~for $<$-stable semantics.
\end{proposition}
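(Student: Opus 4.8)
The plan is to argue by contradiction: assume $E = E_0 \cup \{ \alpha \}$ is $<$-stable and suppose, towards a contradiction, that $E' = E_0 \cup \{ \alpha' \}$ is $<$-stable as well. First I would record two easy observations. Since $\alpha' < \alpha$, asymmetry (equivalently irreflexivity) of $<$ gives $\alpha \neq \alpha'$; combined with $\alpha \not\in E_0$ this yields $\alpha \not\in E'$. Hence, by $<$-stability of $E'$, we must have $E' \pattacks \{ \alpha \}$. The whole proof then consists in showing that this single $<$-attack is incompatible with the $<$-conflict-freeness of $E$, via a case split on whether $E' \pattacks \{ \alpha \}$ is a normal or a reverse attack.

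In the \emph{normal} case there is a deduction $A'' \vdash^{R} \overline{\alpha}$ with $A'' \subseteq E'$ and no $s \in A''$ satisfying $s < \alpha$. Because $\alpha' < \alpha$, the assumption $\alpha'$ cannot lie in $A''$, so $A'' \subseteq E' \setminus \{ \alpha' \} = E_0 \subseteq E$. But then the very same deduction witnesses a normal attack $A'' \pattacks \{ \alpha \}$ with $A'' \subseteq E$ and $\alpha \in E$, and Lemma~\ref{lemma:attacks on supersets} lifts this to $E \pattacks E$, contradicting $<$-conflict-freeness of $E$. In the \emph{reverse} case the attack is witnessed by a deduction from the attacked side, i.e.\ $\{ \alpha \} \vdash^{R} \overline{a}$ for some $a \in E'$ with $\alpha < a$. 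Here $a \neq \alpha'$ (otherwise $\alpha < \alpha'$ would contradict $\alpha' < \alpha$), so $a \in E_0 \subseteq E$; the same deduction, read in the opposite direction, is exactly a reverse $<$-attack $\{ a \} \pattacks \{ \alpha \}$, and since $a, \alpha \in E$, Lemma~\ref{lemma:attacks on supersets} again gives $E \pattacks E$, contradicting $<$-conflict-freeness of $E$.

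Since both cases produce a contradiction with the $<$-conflict-freeness of $E$, the supposition that $E'$ is $<$-stable fails, which establishes the principle. The step I expect to require the most care is the reverse case: one has to unwind the direction of Definition~\ref{definition:<-attack} correctly, noticing that a reverse $<$-attack of $E'$ on $\{ \alpha \}$ produces a deduction \emph{from} $\alpha$ \emph{to} the contrary of an element of the attacker, and then verify that this element genuinely lands in $E_0$ (hence in $E$) rather than being the discarded assumption $\alpha'$. Once that is in place, repurposing the deduction as an internal reverse attack within $E$ is immediate. Notably, this argument uses only the $<$-conflict-freeness of $E$ and the $<$-stability of $E'$, and does not require \contraposition, so the proposition holds for arbitrary $\abafp$.
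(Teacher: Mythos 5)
Your proof is correct and takes essentially the same route as the paper's: contradiction from $E' \pattacks \{\alpha\}$ (using $\alpha \notin E'$), a normal/reverse case split, and the asymmetry of $<$ against the $<$-conflict-freeness of $E$, with no appeal to \contraposition. The only difference is a contrapositive reshuffle within each case: the paper argues the $<$-attack must involve $\alpha'$ (since $E_0 \npattacks \{\alpha\}$) and then contradicts $\alpha' < \alpha$, whereas you use $\alpha' < \alpha$ to show $\alpha'$ cannot be involved and then relocate the attack inside $E$, contradicting its $<$-conflict-freeness.
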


\begin{proof}
Suppose for a contradiction that both $E = E_0 \cup \{ \alpha \}$ and $E' = E_0 \cup \{ \alpha' \}$, where $\alpha' < \alpha$, are $<$-stable extensions of $\abafp$. 
As $E'$ is $<$-stable and $\alpha \not\in E'$, we get $E' \pattacks \{ \alpha \}$. 
As $E$ is $<$-conflict-free, we find $E_0 \npattacks \{ \alpha \}$, so (from $E' \pattacks \{ \alpha \}$ we get that):
(i) either there is $E'' \cup \{ \alpha' \} \vdash^R \overline{\alpha}$ with $E'' \subseteq E_0$ and $\varepsilon \not< \alpha~~\forall \varepsilon \in E'' \cup \{ \alpha' \}$;
(ii) or $\{ \alpha \} \vdash^R \overline{\alpha'}$ is such that $\alpha < \alpha'$. 
As $\alpha' < \alpha$, both cases lead to a contradiction, so that $E'$ is not a $<$-stable extension, provided $E$ is.
\end{proof}

In Example \ref{example:reverse attack}, $E = \{ \alpha \}$ is a unique $<$-stable extension of $\abafp$, which illustrates the principle as follows: 
take $E_0 = \emptyset$ so that $E = \{ \alpha \}$ and $E' = \{ \beta \}$, where $\beta < \alpha$. 
It is important that \principleI~is satisfied under $<$-stable semantics, because \cite{Brewka:Eiter:1999} investigated (preferred) answer sets of logic programs, 
and answer sets in Logic Programming correspond to stable extensions in ABA \cite{Bondarenko:Dung:Kowalski:Toni:1997}. 
Satisfaction of the principle gives hope that preferred answer set semantics can be captured in \abap, as answer set semantics is captured in ABA. 

\principleI, however, may be violated under $<$-preferred semantics: 
in Example \ref{example:fundamental}, $\abafp$ has two $<$-preferred extensions 
$\{ \alpha, \beta \}$ and $\{ \beta, \gamma \}$, 
and yet $\alpha < \gamma$. 
Note, though, that $(\mathcal{L}, \mathcal{R}', \mathcal{A}, \contrary, \leqslant)$ 
satisfies \contraposition~and has a unique $<$-$\sigma$ extension $\{ \beta, \gamma \}$, 
and thus fulfils \principleI~for any semantics $<$-$\sigma$. 
Based on our investigations, we conjecture that assuming Contraposition, \abap~frameworks fulfil the principle for the remaining semantics as well. 
Verifying this is left as future work.

\section{Non-Monotonic Reasoning Properties}
\label{sec:NMR}

\cite{Cyras:Toni:2015} proposed and studied the well known non-monotonic inference properties of \emph{Cautious Monotonicity} (\mon~henceforth) and \emph{Cumulative Transitivity} (\cut~henceforth) for ABA. 
Here, we investigate some of those properties for \abap. 
We first recall (some of) the properties considered and results obtained.\footnote{In \cite{Cyras:Toni:2015}, instead of sceptical/credulous (see below) the words strong/weak were used, respectively; 
we have altered the names to adhere to the more common terminology.}

Assume as given a fixed, but otherwise arbitrary (flat) ABA framework $\mathcal{F} = \abaf$. 
Let $E$ be a $\sigma$ extension of $\mathcal{F}$. 
In what follows, $E'$ will denote a $\sigma$ extension of a newly constructed ABA framework $\mathcal{F}'$. 
To avoid trivialities, we consider cases only where each of $\mathcal{F}$ and $\mathcal{F}'$ has at least one $\sigma$ extension---$E$ and $E'$ respectively.

We first recall the \texttt{STRICT} setting regarding \emph{strengthening of information}. 
Given $\psi \in \cn(E) \setminus \mathcal{A}$, 
define $\mathcal{F}' = (\mathcal{L}, \mathcal{R} \cup \{ \psi \leftarrow \top \}, \mathcal{A}, \contrary)$. 
There are four properties:

\begin{align*}
&\scuts:\\
&\text{For all extensions } E' \text{ of } \mathcal{F}' \text{ we have } \cn(E') \subseteq \cn(E); \\
&\scutc: \\
&\text{There is an extension } E' \text{ of } \mathcal{F}' \text{ with } \cn(E') \subseteq \cn(E); \\
&\smons: \\
&\text{For all extensions } E' \text{ of } \mathcal{F}' \text{ we have } \cn(E) \subseteq \cn(E'); \\
&\smonc: \\
&\text{There is an extension } E' \text{ of } \mathcal{F}' \text{ with } \cn(E) \subseteq \cn(E').
\end{align*}

Table 1 summarizes results pertaining to ABA (sceptical and credulous versions coincide under grounded and ideal semantics, 
and for other semantics the status of the credulous property is indicated in parentheses).

\begin{table}[h]
\begin{center}
\begin{tabular}{| >{\centering}m{1.8cm}<{\centering} | c | c | c | c | c | c |}
\hline
Property & Grd. & Ideal & Stable & Pref. & Cpl. \\ \hline
\scut & $\checkmark$ & $\checkmark$ & \sffamily{X} ($\checkmark$) & \sffamily{X} ($\checkmark$) & \sffamily{X} ($\checkmark$) \\ \hline

\smon & $\checkmark$ & \sffamily{X} & \sffamily{X} ($\checkmark$) & \sffamily{X} ($\checkmark$) & \sffamily{X} ($\checkmark$) \\ \hline
\end{tabular}
\end{center} 
\caption{\texttt{STRICT} \cut\,/\,\mon~for standard ABA}
\end{table}

We now recall the \texttt{ASM} setting, where conclusions that are themselves assumptions are being \emph{confirmed}. 
Given $\psi \in \cn(E) \cap \mathcal{A}$, define $\mathcal{F}' = (\mathcal{L}, \mathcal{R} \cup \{ \psi \leftarrow \top \}, \mathcal{A} \setminus \{ \psi \}, \contrary)$.\footnote{For brevity reasons, the same symbol $\contrary$ is used for both contrary mappings, and in the new framework $\mathcal{F}'$, the contrary mapping $\contrary$ is implicitly restricted to a diminished set of assumptions.}
The properties are as follows:
\begin{align*}
&\acuts: \\
&\text{For all extensions } E' \text{ of } \mathcal{F}' \text{ we have } \cn(E') \subseteq \cn(E); \\
&\acutc: \\
&\text{There is an extension } E' \text{ of } \mathcal{F}' \text{ with } \cn(E') \subseteq \cn(E); \\
&\amons: \\
&\text{For all extensions } E' \text{ of } \mathcal{F}' \text{ we have } \cn(E) \subseteq \cn(E'); \\
&\amonc: \\
&\text{There is an extension } E' \text{ of } \mathcal{F}' \text{ with } \cn(E) \subseteq \cn(E').
\end{align*}

Table 2 summarizes results regarding ABA in the \texttt{ASM} setting (notation as before).

\begin{table}[h]
\begin{center}
\begin{tabular}{| >{\centering}m{1.8cm}<{\centering} | c | c | c | c | c | c |}
\hline
Property & Grd. & Ideal & Stable & Pref. & Cpl. \\ \hline
\acut & $\checkmark$ & $\checkmark$ & \sffamily{X} ($\checkmark$) & \sffamily{X} ($\checkmark$) & \sffamily{X} ($\checkmark$) \\ \hline

\amon & $\checkmark$ & \sffamily{X} & \sffamily{X} ($\checkmark$) & \sffamily{X} ($\checkmark$) & \sffamily{X} ($\checkmark$) \\ \hline
\end{tabular}
\end{center}
\caption{\texttt{ASM} \cut\,/\,\mon~for standard ABA}
\end{table}

The non-monotonic inference properties \cut~and \mon~can be readily applied to \abap. 
Take $\mathcal{F}$ to be an \abap~framework $\abafp$, let $E$ be its $<$-$\sigma$ extension, and given $\psi \in \cn(E)$, define $\mathcal{F}'$ as follows: 
\begin{itemize}
\item \texttt{STRICT} setting: $\mathcal{F}' = (\mathcal{L}, \mathcal{R} \cup \{ \psi \leftarrow \top \}, \mathcal{A}, \contrary, \leqslant)$;
\item \texttt{ASM} setting: $\mathcal{F}' = (\mathcal{L}, \mathcal{R} \cup \{ \psi \leftarrow \top \}, \mathcal{A} \setminus \{ \psi \}, \contrary, \leqslant')$, 
where $\leqslant'$ is a restriction of $\leqslant$ to $\mathcal{A} \setminus \{ \psi \}$.
\end{itemize}

We can then analyse whether the non-monotonic inference properties in question are satisfied in \abap. 
Trivially, as \abap~is a conservative extension of ABA (cf.~Theorem \ref{theorem:ABA+ extends ABA}), 
properties violated in ABA will remain violated in \abap. 
Therefore, we will focus on those that are satisfied in ABA; in particular, the credulous versions except for \mon~under ideal semantics. 

\begin{example}
As an illustration of the properties, recall Example \ref{example:fundamental}. 
The \abap~framework $\mathcal{F} = (\mathcal{L}, \mathcal{R}', \mathcal{A}, \contrary, \leqslant)$ 
(that satisfies \contraposition) has a unique $<$-$\sigma$ extension $\{ \beta, \gamma \}$ 
with $\cn(\{ \beta, \gamma \}) = \{ \overline{\alpha}, \beta, \gamma \}$. 
\begin{itemize}
\item \texttt{STRICT} setting: take $\overline{\alpha}$ and let 
$\mathcal{F}' = (\mathcal{L}, \mathcal{R} \cup \{ \overline{\alpha} \leftarrow \top \}, \mathcal{A}, \contrary, \leqslant)$. 
Then $\mathcal{F}'$ has a unique $<$-$\sigma$ extension $\{ \beta, \gamma \}$.

\item \texttt{ASM} setting: take $\beta$ and let 
$\mathcal{F}' = (\mathcal{L}, \mathcal{R} \cup \{ \beta \leftarrow \top \}, \mathcal{A} \setminus \{ \beta \}, \contrary, \leqslant')$ with $\alpha <' \gamma$. 
Then $\mathcal{F}'$ likewise has a unique $<$-$\sigma$ extension $\{ \beta, \gamma \}$. 
\end{itemize}
As conclusions of extensions of both $\mathcal{F}$ and $\mathcal{F}'$ are actually the same, 
the credulous versions of the properties are indeed satisfied in both settings. 
\end{example}

In what follows, we assume that a given \abap\ framework $\F = \abafp$ satisfies \emph{\wcp} \cite{Cyras:Toni:2016-arxiv}), which is a weaker form of \contraposition. 

We first show that \abap\ inherits the behaviour from ABA with respect to the non-monotonic inference properties under $<$-stable semantics.

\begin{proposition}
\label{proposition:<-stable satisfies strict}
\F\ fulfils \scutc\ and \smonc\ for $<$-stable semantics.
\end{proposition}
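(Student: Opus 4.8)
The plan is to establish both credulous properties with a single witness, $E' = E$, by showing that the given $<$-stable extension $E$ of $\F$ is still $<$-stable in $\F' = (\LL, \R \cup \{\psi \leftarrow \top\}, \A, \contrary, \leqslant)$ and that its conclusions are unchanged. The workhorse is a \emph{grafting} operation: since $\psi \in \cn(E)$, fix once and for all a deduction supporting $\psi$ from some $S_\psi \subseteq E$ using rules in $\R$; then any $\F'$-deduction that invokes the new fact $\psi \leftarrow \top$ is turned into an $\F$-deduction by replacing each occurrence of $\psi$ obtained via $\psi \leftarrow \top$ by a copy of this fixed derivation. Grafting leaves the root unchanged and only enlarges the support from some $T_0$ to $T_0 \cup S_\psi$, which still lies within $E$.

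Using grafting, I would first fix the conclusions. The inclusion of the $\F$-conclusions of $E$ in its $\F'$-conclusions is immediate as $\R \subseteq \R \cup \{\psi \leftarrow \top\}$; conversely, grafting rewrites any $\F'$-deduction supported inside $E$ into an $\F$-deduction still supported inside $E$, so the $\F'$-conclusions of $E$ are contained in $\cn(E)$. Hence the two conclusion sets coincide, and once $E$ is shown $<$-stable in $\F'$, the choice $E' = E$ gives $\cn(E') \subseteq \cn(E)$ and $\cn(E) \subseteq \cn(E')$ simultaneously, which is exactly \scutc\ and \smonc.

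Second, I would verify that $E$ still $<$-attacks everything outside it. Because $E$ is $<$-stable in $\F$, for each $\beta \in \A \setminus E$ there is a $<$-attack $E \pattacks \{\beta\}$ in $\F$; its witnessing deduction uses only rules from $\R$ and its preference side-condition is framework-independent, so the same witness gives $E \pattacks \{\beta\}$ in $\F'$. As the \texttt{STRICT} setting changes neither $\A$ nor $E$, this covers all $\{\beta\} \subseteq \A \setminus E$.

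The crux, which I expect to be the main obstacle, is $<$-conflict-freeness of $E$ in $\F'$, since the extra fact can manufacture attacks inside $E$ by providing $\psi$ for free. The key simplification is that, whether normal or reverse, an attack $E \pattacks E$ is always witnessed by a deduction of $\overline{\gamma}$ for some $\gamma \in E$ supported by a subset of $E$. Grafting this witness yields an $\F$-deduction of $\overline{\gamma}$ with $\gamma \in E$ from some $T \subseteq E$. I would then classify it inside $\F$ by a single case split: if no assumption in $T$ is strictly less preferred than $\gamma$, it is a normal $<$-attack $T \pattacks \{\gamma\}$; otherwise it is a reverse $<$-attack $\{\gamma\} \pattacks T$. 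In either case $\gamma \in E$ and $T \subseteq E$, so Lemma \ref{lemma:attacks on supersets} forces $E \pattacks E$ in $\F$, contradicting the $<$-conflict-freeness of the $<$-stable set $E$. The delicate point is precisely this classification: grafting drags the support $S_\psi$ into $T$, and these assumptions may relate to $\gamma$ in an arbitrary way, so the grafted attack need not retain the type of the original; the two-way split is what absorbs this. Notably, the argument rests only on $<$-conflict-freeness of $E$ and Lemma \ref{lemma:attacks on supersets}, so it appears not to require the standing Weak Contraposition assumption. With conflict-freeness and the covering property in hand, $E$ is $<$-stable in $\F'$, and the proof is complete.
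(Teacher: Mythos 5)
Your proof is correct and follows essentially the same route as the paper's: choose $E' = E$, check that $E$ remains $<$-stable in $\F' $ by reusing the original attack witnesses (the preference side-conditions being framework-independent), and reduce $<$-conflict-freeness in $\F'$ to that in $\F$ by exploiting $\psi \in \cn(E)$ --- your explicit grafting plus the two-way normal/reverse case split is precisely the content of Lemma \ref{lemma:attacks} and Proposition \ref{proposition:<-conflict-free iff conflict-free}, which the paper cites instead of re-deriving. Your side observation that Weak Contraposition is never actually needed is also accurate: the paper's proof of this proposition states the assumption but never invokes it, in contrast to the \texttt{ASM} analogue (Proposition \ref{proposition:<-stable satisfies asm}), where it is used.
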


\begin{proof}
Let $\F = \abafp$ be a flat \abap\ framework satisfying \wcp, let $\asmE$ be a $<$-stable extension of $\F$, and let $\psi \in \cn(\asmE) \setminus \A$. 
Define $\F' = (\LL, \R \cup \{ \psi \ot \top \}, \A, \contrary, \leqslant)$. 
$\cn$ and $\cn'$ will denote the conclusion operators of, respectively, $\F$ and $\F'$. 
The $<$-attack relations of $\F$ and $\F'$ will be denoted by $\pattacks$ and $\pattacks'$, respectively. 
We claim that $\asmE$ is a $<$-stable extension of $\F'$. 

Suppose for a contradiction that $\asmE$ is not $<$-conflict-free in $\F'$. 
Then $\asmE$ is not conflict-free in $(\LL, \R \cup \{ \psi \ot \top \}, \A, \contrary)$, by \cite[Theorem 5]{Cyras:Toni:2016-arxiv}. 
But then, as $\psi \in \cn(\asmE)$, $\asmE$ is not conflict-free in $\abaf$ either. 
Hence, by \cite[Theorem 5]{Cyras:Toni:2016-arxiv}, $\asmE$ is not $<$-conflict-free in $\F$, which is a contradiction. 
Thus, $\asmE$ is $<$-conflict-free in $\F'$. 

Now let $\asmb \in \A \setminus \asmE$ be arbitrary. 
We aim to show that $\asmE \pattacks' \{ \asmb \}$. 
To this end, as $\asmE$ is $<$-stable in $\F$, we know that $\asmE \pattacks \{ \asmb \}$. 
We split into cases. 

\begin{itemize}
\item Suppose $\asmE \pattacks \{ \asmb \}$ via normal attack. 
Then $\asmA \vdash^R \contr{\asmb}, \asmA \subseteq \asmE, R \subseteq \R$ and $\forall \asma \in \asmA~~\asma \not< \asmb$. 
If this deduction does not involve $\psi$, then clearly we have $\asmA \pattacks' \{ \asmb \}$ via normal attack. 
Else, we can find $\asmA' \subseteq \asmA$ and $R' \subseteq R \cup \{ \psi \ot \top \}$ such that $\asmA' \vdash^{R'} \contr{\asmb}$, whence clearly $\asmA' \pattacks' \{ \asmb \}$ via normal attack too. 

\item Suppose $\asmE \pattacks \{ \asmb \}$ via reverse attack. 
Then $\{ \asmb \} \vdash \contr{\asme}$ for some $\asme \in \asmE$ such that $\asmb < \asme$. 
Since $\asmb \not\in \asmE$ and $\asmE$ is $<$-conflict-free, this $<$-attack does not involve $\psi$. 
Hence, $\{ \asme \} \pattacks' \{ \asmb \}$ via reverse attack too. 
\end{itemize}
In any event, $\asmE \pattacks' \{ \asmb \}$, as required. 
Therefore, $\asmE$ is $<$-stable in $\F'$. 
\end{proof}

\begin{proposition}
\label{proposition:<-stable satisfies asm}
\F\ fulfils \acutc\ and \amonc\ for $<$-stable semantics.
\end{proposition}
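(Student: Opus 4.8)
The plan is to mirror the structure of the proof of Proposition~\ref{proposition:<-stable satisfies strict}, adapting it to the \texttt{ASM} setting where $\psi \in \cn(\asmE) \cap \A$ is an assumption being confirmed as a fact, and the assumption set shrinks to $\A \setminus \{ \psi \}$. Concretely, let $\asmE$ be a $<$-stable extension of $\F = \abafp$ satisfying \wcp, pick $\psi \in \cn(\asmE) \cap \A$, and define $\F' = (\LL, \R \cup \{ \psi \ot \top \}, \A \setminus \{ \psi \}, \contrary, \leqslant')$ with $\leqslant'$ the restriction of $\leqslant$. The claim to establish is that $\asmE \setminus \{ \psi \}$ is a $<$-stable extension of $\F'$: confirming $\psi$ as a fact removes it from the assumption pool, but since $\asmE$ already ``contained'' $\psi$ as a conclusion and as an assumption, the remaining assumptions should form a stable extension of the diminished framework, with matching conclusions yielding both \acutc\ and \amonc.

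First I would show that $\asmE' := \asmE \setminus \{ \psi \}$ is $<$-conflict-free in $\F'$. The key observation is that because $\psi \in \cn(\asmE)$, the rule $\psi \ot \top$ adds no genuinely new deductive power relative to what $\asmE$ already derives: anything deducible in $\F'$ from $\asmE'$ together with $\psi \ot \top$ is already deducible in $\F$ from $\asmE$ (one replaces any leaf labelled $\psi$ by the deduction of $\psi$ that $\asmE$ supports). So a $<$-conflict in $\F'$ would, via \cite[Theorem 5]{Cyras:Toni:2016-arxiv} and the correspondence between ($<$-)conflict-freeness and plain conflict-freeness, pull back to a conflict in $\F$ on $\asmE$, contradicting $<$-stability of $\asmE$. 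Second, for each $\asmb \in (\A \setminus \{ \psi \}) \setminus \asmE'$ I would show $\asmE' \pattacks' \{ \asmb \}$ by splitting on whether the $<$-stable attack $\asmE \pattacks \{ \asmb \}$ available in $\F$ is normal or reverse, exactly as in the previous proposition, taking care that deductions routed through $\psi$ can be rerouted using $\psi \ot \top$ (so $\psi$ being dropped from the assumption set is harmless because it is now a fact), and that the preference comparisons survive under $\leqslant'$ since $\leqslant'$ is merely a restriction.

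The main obstacle I anticipate is the careful bookkeeping around $\psi$ when it appears inside deductions, both in the conflict-freeness argument and in the attack-transfer argument. In the \texttt{STRICT} setting $\psi \notin \A$, so $\psi$ could only ever occur as an intermediate head or leaf that the new fact $\psi \ot \top$ replaces; here $\psi \in \A$, so $\psi$ previously functioned as an assumption (a leaf that could be attacked and that could carry a preference label) and now functions as a fact. I must verify that the preference side-conditions in Definition~\ref{definition:<-attack}---the $\not<$ condition for normal attacks and the $<$ condition for reverse attacks---are not disrupted when $\psi$ is removed from the assumption set: since $\psi$ is no longer an assumption, any normal attack that relied on $\psi \not< \asmb$ is now vacuously fine, and any reverse attack cannot target $\psi$ (it is not in $\A \setminus \{ \psi \}$), so such cases either disappear or collapse to attacks among the surviving assumptions. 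Handling the reverse-attack subcase where the original attack used $\psi$'s preference relationship is the delicate point, and I would resolve it by appealing to \wcp\ to contrapose and reroute the deduction through the surviving assumptions, just as \contraposition\ was used in Lemma~\ref{lemma:Fundamental}.

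Finally, once $\asmE'$ is shown $<$-stable in $\F'$, I would close the argument by comparing conclusions: $\cn'(\asmE') = \cn(\asmE)$ because adding $\psi \ot \top$ recovers exactly the derivations that previously used $\psi \in \asmE$ as an assumption-leaf, so no conclusions are gained or lost. This equality gives $\cn(\asmE) \subseteq \cn'(\asmE')$ and $\cn'(\asmE') \subseteq \cn(\asmE)$ simultaneously, witnessing both \amonc\ and \acutc\ for $<$-stable semantics with the single extension $\asmE'$.
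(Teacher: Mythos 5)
Your proposal is correct and takes essentially the same route as the paper's proof: you establish that $\asmE \setminus \{ \psi \}$ is $<$-conflict-free in $\F'$ by pulling a putative conflict back to $\F$ via the correspondence between $<$-conflict-freeness and plain conflict-freeness, transfer the stability attacks by the same normal/reverse case split with deductions rerouted through the new fact $\psi \ot \top$, and handle the one delicate subcase---a reverse attack in $\F$ whose target is $\psi$ itself---exactly as the paper does, by invoking \wcp\ (which yields a deduction supported by $\{ \psi \}$ in $\F$, hence by $\emptyset$ in $\F'$ where $\psi$ is a fact, giving a normal attack). Your closing observation that $\cn'(\asmE \setminus \{ \psi \}) = \cn(\asmE)$, witnessing both \acutc\ and \amonc\ simultaneously, is also the paper's final step.
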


\begin{proof}
Let $\F = \abafp$ be a flat \abap\ framework satisfying \wcp, let $\asmE$ be a $<$-stable extension of $\F$, and let $\psi \in \cn(\asmE) \cap \A$. 
Define $\F' = (\LL, \R \cup \{ \psi \ot \top \}, \A \setminus \{ \psi \}, \contrary, \leqslant')$, 
where $\leqslant'$ is a restriction of $\leqslant$ to $\A \setminus \{ \psi \}$.
$\cn$ and $\cn'$ will denote the conclusion operators of, respectively, $\F$ and $\F'$. 
The $<$-attack relations of $\F$ and $\F'$ will be denoted by $\pattacks$ and $\pattacks'$, respectively. 
We show that $\asmE \setminus \{ \psi \}$ is $<$-stable in $\F'$. 

Suppose for a contradiction that $\asmE \setminus \{ \psi \}$ is not $<$-conflict-free in $\F'$. 
Then $\asmE \setminus \{ \psi \}$ is not conflict-free in $(\LL, \R \cup \{ \psi \ot \top \}, \A \setminus \{ \psi \}, \contrary)$, by \cite[Theorem 5]{Cyras:Toni:2016-arxiv}. 
But then, as $\psi \in \cn(\asmE)$, $\asmE$ is not conflict-free in $\abaf$.  
Hence, by \cite[Theorem 5]{Cyras:Toni:2016-arxiv}, $\asmE$ is not $<$-conflict-free in $\F$, which is a contradiction. 
Thus, $\asmE \setminus \{ \psi \}$ must be $<$-conflict-free in $\F'$. 

Now let $\asmb \in \A \setminus (\asmE \cup \{ \psi \})$ be arbitrary. 
We aim to show that $\asmE \setminus \{ \psi \} \pattacks' \{ \asmb \}$. 
To this end, as $\asmE$ is $<$-stable in $\F$, we know that $\asmE \pattacks \{ \asmb \}$. 

\begin{itemize}
\item Suppose $\asmE \pattacks \{ \asmb \}$ via normal attack. 
Then $\asmA \vdash^R \contr{\asmb}, \asmA \subseteq \asmE, R \subseteq \R$ and $\forall \asma \in \asmA~~\asma \not< \asmb$. 
If $\psi \not\in \asmA$, then we have $\asmA \pattacks' \{ \asmb \}$ via normal attack. 
Else, we find $\asmA \setminus \{ \psi \} \vdash^{R} \contr{\asmb}$, so that $\asmA \setminus \{ \psi \} \pattacks' \{ \asmb \}$ via normal attack. 

\item Suppose $\asmE \pattacks \{ \asmb \}$ via reverse attack. 
Then $\{ \asmb \} \vdash \contr{\asme}$ for some $\asme \in \asmE$ such that $\asmb < \asme$. 
If $\asme \neq \psi$, then this $<$-attack does not involve $\psi$, and so we have $\{ \asme \} \pattacks' \{ \asmb \}$ via reverse attack, where $\{ \asme \} \subseteq \asmE \setminus \{ \psi \}$. 
Else, $\{ \asmb \} \vdash \contr{\psi}$ and $\asmb < \psi$, so \wcp\ guarantees that, in $\F$,  we have $\asmA \vdash \contr{\asmb}$ for some $\asmA \subseteq \{ \psi \}$. 
But then, in $\F'$, we find $\emptyset \vdash \contr{\asmb}$. 
\end{itemize}
In any event, $\asmE \setminus \{ \psi \} \pattacks' \{ \asmb \}$, as required. 
Therefore, $\asmE \setminus \{ \psi \}$ is $<$-stable in $\F'$. 
Finally, note that $\cn(\asmE) = \cn'(\asmE \setminus \{ \psi \})$. 
\end{proof}

In general, \abap\ does not inherit all the properties from ABA. 
In particular, \cut\ and \mon\ can in general be violated in both \texttt{STRICT} and \texttt{ASM} settings under all but $<$-stable semantics. 
The following examples illustrate violations. 

\begin{example}[\smon\ violation]
\label{example:strict mon violation}
Consider $\F = \abafp$ with
\begin{itemize}
\item $\A = \{ \asma, \asmb, \asmp, \asmq, \asme, \asmx \}$, 
\item $\R = \{ \psi \ot \asmp, \asmq, ~~ 
\contr{\asme} \ot \asmb, \asmx, \psi, ~~
\contr{\asma} \ot \asmb, \asmx, \asmp, ~~ 
\contr{\asmb} \ot \asma, \asmx, \asmp, ~~ 
\contr{\asmx} \ot \asmx \}$, 
\item $\asmb < \asma$. 
\end{itemize}

This flat \abap\ framework $\F$ satisfies \wcp. 
It has a unique $<$-grounded/$<$-ideal/$<$-preferred/$<$-complete (but not $<$-stable) extension $\asmE = \{ \asmp, \asmq, \asma, \asme \}$ with $\cn(\asmE) = \{ \asmp, \asmq, \asma, \psi, \asme \}$. 
Note that $\{ \asma \}$ $<$-defends $\{ \asme \}$ from $\{ \asmb, \asmx, \asmp, \asmq \}$ by $<$-attacking the latter via reverse attack, due to the rule $\contr{\asma} \ot \asmb, \asmx, \asmp$ and the preference $\asmb < \asma$. 

Consider $\F' = (\LL, \R \cup \{ \psi \ot \top \}, \A, \contrary, \leqslant)$. 
In this framework, $\{ \asme \}$ is $<$-attacked by the self-$<$-attacking $\{ \asmb, \asmx \}$, and no subset of $\asmE$ can $<$-defend $\{ \asme \}$ against this $<$-attack. 
Indeed, $\F'$ has a unique $<$-grounded/$<$-ideal/$<$-preferred/$<$-complete (but not $<$-stable) extension $\asmE' = \{ \asmp, \asmq, \asma \}$ with $\cn'(\asmE') = \{ \asmp, \asmq, \asma, \psi \} \nsupseteq \cn(\asmE)$. 
(Here and in further examples, $\cn'$ is the conclusion operator of $\F'$.) 
Hence, $\F$ does not fulfil \smon\ under any of the four semantics in question. 
\end{example}

\begin{example}[\amon\ violation]
\label{example:asm mon violation}
Consider $\F$ and $\asmE$ from Example \ref{example:strict mon violation}. 
Let $\F' = (\LL, \R \cup \{ \asma \ot \top \}, \A \setminus \{ \asma \}, \contrary, \emptyset)$. 
In $\F'$, $\{ \asme \}$ is $<$-attacked by the self-$<$-attacking $\{ \asmb, \asmx, \asmp, \asmq \}$, and cannot be $<$-defended by any set not containing $\asmx$. 
Overall, $\F'$ has a unique $<$-grounded/$<$-ideal/$<$-preferred/$<$-complete (but not $<$-stable) extension $\asmE' = \{ \asmp, \asmq \}$ with $\cn'(\asmE') = \{ \asmp, \asmq, \asma, \psi \} \nsupseteq \cn(\asmE)$. 
Hence, $\F$ does not fulfil \amon\ under any of the four semantics in question. 
\end{example}

\begin{example}[\acut\ violation]
\label{example:asm cut violation}
Consider $\F = \abafp$ with
\begin{itemize}
\item $\A = \{ \asma, \asmb, \asmp, \asmq, \asme, \asmx \}$, 
\item $\R = \{ \psi \ot \asmp, \asmq, ~~ 
\contr{\asme} \ot \asmb, \asmx, \psi, ~~
\contr{\asma} \ot \asmb, \asmx, \asmp, ~~ 
\contr{\asmb} \ot \asma, \asmx, \asmp, ~~ 
\contr{\asmx} \ot \asmx, ~~ 
\psi \ot \top \}$, 
\item $\asmb < \asma$. 
\end{itemize}

So $\F$ is simply $\F'$ from Example \ref{example:strict mon violation}. 
It satisfies \wcp\ and we know that it has a unique $<$-grounded/$<$-ideal/$<$-preferred/$<$-complete (but not $<$-stable) extension $\asmE = \{ \asmp, \asmq, \asma \}$ with $\cn(\asmE) = \{ \asmp, \asmq, \asma, \psi \}$. 
Let $\F' = (\LL, \R \cup \{ \asmp \ot \top \}, \A \setminus \{ \asmp \}, \contrary, \leqslant)$. 
In $\F'$, given that $\asmp$ is a fact, the $<$-attacker $\{ \asmb, \asmx \}$ of $\{ \asme \}$ is $<$-attacked by $\{ \asma \}$ via reverse attack. 
Thus, $\{ \asma \}$ $<$-defends $\{ \asme \}$, and so $\asmE' = \{ \asmq, \asma, \asme \}$ with $\cn'(\asmE') = \{ \asmp, \asmq, \asma, \psi, \asme \} \nsubseteq \cn(\asmE)$ is a unique $<$-grounded/$<$-ideal/$<$-preferred/$<$-complete (but not $<$-stable) extension of $\F'$. 
This shows that $\F$ does not fulfil \acut\ under any of the four semantics in question. 
\end{example}

\begin{example}[\scut\ violation]
\label{example:strict cut violation}
Consider $\F = \abafp$ with
\begin{itemize}
\item $\A = \{ \asma, \asmb, \asmp, \asmq, \asme, \asmx \}$, 
\item $\R = \{ \psi \ot \asmp, \asmq, ~~ 
\contr{\asme} \ot \asmb, \asmx, \psi, ~~
\contr{\asma} \ot \asmb, \asmx, y, ~~ 
y \ot \asmp, ~~
\contr{\asmb} \ot \asma, \asmx, \asmp, ~~ 
\contr{\asmx} \ot \asmx, ~~ 
\contr{\asmb} \ot \asma, \asmx, ~~
\psi \ot \top \}$, 
\item $\asmb < \asma$. 
\end{itemize}
(So, in contrast to the framework from Example \ref{example:strict mon violation}, there is an intermediate non-assumption $y$ deducible from $\{ \asmp \}$ and replacing $\asmp$ in the rule $\contr{\asma} \ot \asmb, \asmx, \asmp$; 
we also have $\psi$ as a fact, and the rule $\contr{\asmb} \ot \asma, \asmx$ will be needed for Weak Contraposition in the framework $\F'$ after the change.)

$\F$ satisfies \wcp\ and has a unique $<$-grounded/$<$-ideal/$<$-preferred/$<$-complete (but not $<$-stable) extension $\asmE = \{ \asmp, \asmq, \asma \}$ with $\cn(\asmE) = \{ \asmp, \asmq, \asma, \psi, y \}$. 
Let $\F' = (\LL, \R \cup \{ y \ot \top \}, \A, \contrary, \leqslant)$. 
(Since in $\F'$ we have the deduction $\{ \asmb, \asmx \} \vdash^{\{ \contr{\asma} \ot \asmb, \asmx \}} \contr{\asma}$ with $\asmb < \asma$, the rule $\contr{\asmb} \ot \asma, \asmx$ guarantees that $\F'$ satisfies \wcp.)
Similarly to Example \ref{example:asm cut violation}, $\{ \asma \}$ $<$-defends $\{ \asme \}$, and $\F'$ has a unique $<$-grounded/$<$-ideal/$<$-preferred/$<$-complete (but not $<$-stable) extension $\asmE' = \{ \asmp, \asmq, \asma, \asme \}$ with $\cn'(\asmE') = \{ \asmp, \asmq, \asma, \psi, y, \asme \} \nsubseteq \cn(\asmE)$. 
Hence, $\F$ does not fulfil \scut\ under any of the four semantics in question. 
\end{example}

Table 3 summarizes this section's results 
(sceptical and credulous versions coincide under $<$-grounded and $<$-ideal semantics; 
for other semantics the credulous version is indicated in parentheses.)

\begin{table}[h]
\begin{center}
\begin{tabular}{| >{\centering}m{2cm}<{\centering} | c | c | c | c | c | c |}
\hline
Property & $<$-g.\!\! & $<$-id.\!\! & $<$-stb.\!\! & $<$-pr.\!\! & $<$-cpl.\!\! \\ \hline
\texttt{STRICT/ASM} \cut & \sffamily{X} & \sffamily{X} & \sffamily{X} ($\checkmark$) & \sffamily{X} (\sffamily{X}) & \sffamily{X} (\sffamily{X}) \\ \hline

\texttt{STRICT/ASM} \mon & \sffamily{X} & \sffamily{X} & \sffamily{X} ($\checkmark$) & \sffamily{X} (\sffamily{X}) & \sffamily{X} (\sffamily{X}) \\ \hline
\end{tabular}
\end{center} 
\caption{(\texttt{STRICT} and \texttt{ASM}) \cut~and \mon~for \abap}
\end{table}

\section{Related and Future Work}
\label{sec:Related Work}

The principle of Contraposition of (strict) rules (see e.g.~\cite{Caminada:Amgoud:2007,Modgil:Prakken:2013}) is notably employed in the well studied structured argumentation formalism \aspicp~\cite{Modgil:Prakken:2013,Modgil:Prakken:2014}. 
The principle as such is also inherently present in classical logic-based approaches to structured argumentation such as \cite{Gorogiannis:Hunter:2011,Besnard:Hunter:2014}. 
Similarly as in \aspicp, \abap~utilizes Contraposition to ensure the Fundamental Lemma (cf.~Lemma \ref{lemma:Fundamental}).
As a consequence, Contraposition paves way to satisfaction of desirable properties of \abap~semantics, as well as preference handling as discussed in Section \ref{sec:Preference Handling}. 
Whether \contraposition~can be relaxed for \abap~to obtain the same results is a line of future research. 
So far we know only that a particular relaxation, namely \wcp~\cite{Cyras:Toni:2016-arxiv}, changes the behaviour of \abap\ with respect to non-monotonic reasoning properties, as discussed in Section \ref{sec:NMR}.

The preference handling principle discussed in Section \ref{subsec:Principle I} was originally proposed, along with some other properties, by \cite{Brewka:Eiter:1999} for answer set programming (ASP) with preferences. 
To the best of our knowledge, reformulation of Principle I for \abap~is the first application of this principle to argumentation with preferences. 
Building on \cite{Brewka:Eiter:1999}, \cite{Simko:2014} discussed an extended set of principles for ASP with preferences, most of which focus on preferences over rules. 
Whether those principles can be applied to \abap~is a future work direction. 

Regarding preference handling in argumentation, along with \maximal~discussed in Section \ref{subsec:Maximal Elements}, \cite{Amgoud:Vesic:2014} suggested several arguably desirable properties of argumentation with preferences. 
Those properties are exhibited in \abap~as Proposition \ref{proposition:<-conflict-free iff conflict-free} and Theorem \ref{theorem:ABA+ extends ABA}. 
Referring to those properties, \cite{Brewka:Truszczynski:Woltran:2010} also hinted at other properties regarding selection among extensions, as possible principles of preference handling in argumentation. 
Relating those principles to \abap~is left for future work. 

In terms on non-monotonic reasoning properties, Cautious Monotonicity and Cumulative Transitivity (studied in Section \ref{sec:NMR}) 
are traced to \cite{Makinson:1988,Kraus:Lehmann:Magidor:1990} 
and fall into the well studied area of analysing non-monotonic reasoning with respect to information change (cf.~\cite{Rott:2001}). 
In argumentation setting, the latter is also known as \emph{argumentation dynamics}, and has recently been a topic of interest in the argumentation community (see e.g.~\cite{Cayrol:Saint-Cyr:Lagasquie-Schiex:2010,Falappa:Garcia:Kern-Isberner:Simari:2011,Baroni:Boella:Cerutti:Giacomin:Torre:Villata:2014,Coste-Marquis:Konieczny:Mailly:Marquis:2014,Booth:Gabbay:Kaci:Rienstra:Torre:2014,Diller:Haret:Linsbichler:Rummele:Woltran:2015,Baumann:Brewka:2015}). 
In particular, non-monotonic inference properties were investigated in \cite{Hunter:2010} with respect to argument--claim entailment in logic-based argumentation systems; 
in \cite{Cyras:Toni:2015} for ABA; 
and with regards to \aspicp-type-of argumentation systems in \cite{Dung:2016}. 
Only the latter of the three works concerns argumentation with preferences. 
In addition to considering different structured argumentation setting and different preference handling mechanisms, 
it diverges from our analysis in Section \ref{sec:NMR} in that
\cite{Dung:2016} regards Cumulative Transitivity plus Cautious Monotonicity as a single property of Cumulativity and studies it only for stable and complete semantics. 
Other argumentation-related properties from \cite{Dung:2016} will be studied for \abap~in the future. 

Several other topics of interest are left for future work. 
For instance, integrating \emph{dynamic preferences} (see e.g.~\cite{Zhang:Foo:1997,Prakken:Sartor:1999,Brewka:Woltran:2010}) within \abap~and studying their interaction with the properties of preference handling as well as of non-monotonic inference. 
Also, relation of \abap~to Logic Programming with preferences (e.g.~\cite{Sakama:Inoue:1996,Zhang:Foo:1997,Brewka:Eiter:1999}) and non-monotonic reasoning formalisms equipped with preferences in general (e.g.~\cite{Brewka:1989,Baader:Hollunder:1995,Rintanen:1998,Brewka:Eiter:2000,Delgrande:Schaub:2000,Stolzenburg:Garcia:Chesnevar:Simari:2003,Kakas:Moraitis:2003}) is left for future research. 

There are as well numerous approaches to integrating reasoning with preferences within argumentation, e.g.~\cite{Amgoud:Cayrol:2002,Bench-Capon:2003,Kaci:Torre:2008,Modgil:2009,Modgil:Prakken:2010,Baroni:Cerutti:Giacomin:Guida:2011,Dunne:Hunter:McBurney:Parsons:Wooldridge:2011,Brewka:Ellmauthaler:Strass:Wallner:Woltran:2013,Amgoud:Vesic:2014,Besnard:Hunter:2014,Garcia:Simari:2014,Wakaki:2014,Modgil:Prakken:2013,Modgil:Prakken:2014,Dung:2016}. 
It would be interesting to study these formalisms with respect to the properties considered in this paper, where it has not already been done. 
We leave this as future work.

\section{Conclusions}
\label{sec:Conclusions}

We investigated various properties of a recently proposed non-monotonic reasoning formalism \abap~\cite{Cyras:Toni:2016-KR} that deals with preferences in structured argumentation. 
In particular, we first established that assuming the principle of Contraposition (see e.g.~\cite{Modgil:Prakken:2013}), 
\abap~semantics exhibit desirable properties akin to those of other existing argumentation formalisms, such as \cite{Dung:1995}. 
We then showed that \abap~satisfies some (arguably) desirable principles of preference handling in argumentation and non-monotonic reasoning, e.g.~\cite{Brewka:Eiter:1999}. 
Finally, we analysed non-monotonic inference properties (as in \cite{Cyras:Toni:2015}) of \abap~under various semantics. 
We believe our work contributes to the understanding of preferences within argumentation in particular, and in non-monotonic reasoning at large.

\bibliographystyle{aaai}
\bibliography{../../../Readings/library}

\end{document}